\def\eqref#1{equation~\ref{#1}}
\def\1{\bm{1}}
\DeclareMathAlphabet{\mathsfit}{\encodingdefault}{\sfdefault}{m}{sl}
\SetMathAlphabet{\mathsfit}{bold}{\encodingdefault}{\sfdefault}{bx}{n}
\def\gF{{\mathcal{F}}}
\def\gH{{\mathcal{H}}}
\def\gN{{\mathcal{N}}}
\def\gP{{\mathcal{P}}}
\def\sN{{\mathbb{N}}}
\def\sR{{\mathbb{R}}}
\newcommand{\E}{\mathbb{E}}
\newcommand{\R}{\mathbb{R}}
\newcommand{\KL}{D_{\mathrm{KL}}}
\newcommand{\Var}{\mathrm{Var}}
\DeclareMathOperator*{\argmin}{arg\,min}
\newcommand{\eRp}{\overline{\mathbb{R}}^{+}}
\newcommand{\set}[1]{\left\{#1\right\}}
\newcommand{\inte}{\mathrm{int}}
\newcommand{\dom}{\mathrm{dom}}
\newcommand{\co}{\mathrm{co}}
\newcommand{\cco}{\overline{\mathrm{co}}}
\newcommand{\xe}{\mathrm{XE}}
\newcommand{\btriangle}{\mathpalette\btriangle@\relax}
\newcommand{\btriangle@}[2]{%
    \begingroup
    \sbox\z@{$\m@th#1\triangle$}%
    \makebox[\wd\z@]{%
        \raisebox{0.04\height}{%
            \resizebox{1.1\wd\z@}{0.96\ht\z@}{%
                $\m@th#1\blacktriangle$%
            }%
        }%
    }%
    \endgroup
}
\theoremstyle{plain}
\newtheorem*{rep@theorem}{\rep@title}
\newcommand{\newreptheorem}[2]{%
\newenvironment{rep#1}[1]{%
 \def\rep@title{#2 \ref{##1}}%
 \begin{rep@theorem}}%
 {\end{rep@theorem}}}
\newtheorem{theorem}{Theorem}[section]
\newtheorem{lemma}[theorem]{Lemma}
\newtheorem{corollary}[theorem]{Corollary}
\newtheorem{fact}[theorem]{Fact}
\theoremstyle{definition}
\newtheorem{definition}[theorem]{Definition}
\theoremstyle{remark}
\begin{document}

\title{Relating Misfit to Gain in Weak-to-Strong Generalization Beyond the Squared Loss}

\author{
    Abhijeet Mulgund\thanks{Both authors contributed equally to this work.}\\
    \texttt{mulgund2@uic.edu}
    \and
    Chirag Pabbaraju\footnotemark[1]\\
    \texttt{cpabbara@stanford.edu}
}

\maketitle

\begin{abstract}
    The paradigm of weak-to-strong generalization constitutes the training of a strong AI model on data labeled by a weak AI model, with the goal that the strong model nevertheless outperforms its weak supervisor on the target task of interest. For the setting of real-valued regression with the squared loss, recent work quantitatively characterizes the gain in performance of the strong model over the weak model in terms of the misfit between the strong and weak model. We generalize such a characterization to learning tasks whose loss functions correspond to arbitrary Bregman divergences when the strong class is convex. This extends the misfit-based characterization of performance gain in weak-to-strong generalization to classification tasks, as the cross-entropy loss can be expressed in terms of a Bregman divergence. In most practical scenarios, however, the strong model class may not be convex. We therefore weaken this assumption and study weak-to-strong generalization for convex combinations of $k$ strong models in the strong class, in the concrete setting of classification. This allows us to obtain a similar misfit-based characterization of performance gain, upto an additional error term that vanishes as $k$ gets large. Our theoretical findings are supported by thorough experiments on synthetic as well as real-world datasets.

\end{abstract}

\section{Introduction} \label{sec:introduction}

The premise in weak-to-strong generalization \citep{burns2023weaktostronggeneralizationelicitingstrong} is that increasingly strong AI models will learn to outperform their weak supervisors, even when they are trained on data labeled by these weak supervisors. By virtue of being larger and more complex models that have presumably also seen more data in their pretraining lifecycle, the hope is that strong models will invariably know to push past the performance ceiling of their weak supervisors. This phenomenon is central towards the eventual goal of superalignment \citep{superalignment}, where we expect AI models to exhibit superhuman skills aligned with human principles from human supervision. Perhaps encouragingly, recent studies \citep{burns2023weaktostronggeneralizationelicitingstrong, guo2024vision, ji2024aligner, sun2024easy, yang2024weak, zhang2024transcendence, liu2024co} have affirmed that such a phenomenon can in fact be elicited. Given the extent to which emergent AI would appear to influence our future going forward, it becomes imperative to theoretically understand when and how weak-to-strong generalization may provably be exhibited.

Towards this, a recent work by \citet{charikar2024quantifyinggainweaktostronggeneralization} provides a mathematical characterization of weak-to-strong generalization for the specific task of real-valued regression. The setting in this work considers finetuning the final linear layer of a strong model (keeping other parameters fixed) to minimize the squared loss on labels given by a weak model. Using a geometric argument involving projections onto a convex set, \citet{charikar2024quantifyinggainweaktostronggeneralization} show that a strong model trained in this manner provably attains smaller loss than the weak model on the target task. Moreover, the quantitative reduction in the loss (equivalently, the performance ``gain'') is at least as much as the loss of the strong model on the weakly labeled data, or rather the ``misfit''\footnote{This notion of misfit is also referred to as ``student-supervisor disagreement'' in \citet{burns2023weaktostronggeneralizationelicitingstrong}.} between the strong and weak model. Notably, \citet{charikar2024quantifyinggainweaktostronggeneralization} only obtain results for regression, leaving open whether such misfit-based characterization for weak-to-strong generalization can also be formally shown for other tasks, including the standard task of classification, which was one of the primary subjects of study in \citet{burns2023weaktostronggeneralizationelicitingstrong}.

In this work, inspired by the theory of information geometry \citep{amari2016information, Ay2017}, we significantly extend the misfit-based characterization of the performance gain in weak-to-strong generalization beyond regression tasks. Our main observation is that the Pythagorean inequality for the squared loss that is used to establish the result in \citet{charikar2024quantifyinggainweaktostronggeneralization} holds more generally for \textit{Bregman divergences} \citep{bregman1967relaxation}. Thus, whenever the loss function in the learning task is a Bregman divergence of some kind, a corresponding misfit-based weak-to-strong generalization inequality can be established. For example, since the squared $\ell_2$ distance is a Bregman divergence, the result of \citet{charikar2024quantifyinggainweaktostronggeneralization} constitutes a special case of this theory. More importantly, the standard loss function that is used in classification tasks, namely cross-entropy, can also be written down as a Bregman divergence (namely the Kullback-Leibler (KL) divergence \citep{kullback1951information}) plus an additive entropy term. This lets us derive a recipe for provable weak-to-strong generalization in the setting of classification (in the sense that the cross-entropy loss provably reduces), along with a misfit-based characterization of the gain similar to \citet{charikar2024quantifyinggainweaktostronggeneralization}.

Our recipe for guaranteeing provable weak-to-strong generalization is however somewhat non-standard, and possibly counterintuitive: optimize a convex combination of $k$ logistic regression layers on top of the strong model representation to minimize the expected KL divergence between the strong model's output and the weak labels. Notably, the strong model's output is in the \textit{first} argument in the KL divergence---contrast this to standard supervised classification with the cross-entropy loss, where instead, the supervisor's labels are in the first argument of the cross-entropy (and consequently, the KL divergence). In this sense, our recipe is asking to minimize the discrepancy between the supervisor (weak model) and student (strong model), but in the \textit{opposite} direction. Nevertheless, for sufficiently large values of $k$, doing so provably results in smaller cross-entropy between the ground-truth data and the strong model (which is what we actually care about) than the cross-entropy between the ground-truth data and the weak model. Furthermore, the reduction in the loss is at least the KL divergence between the strong model and the weak model at the conclusion of weak-to-strong training! Thus, performance gain can yet again be measured in terms of the misfit between the strong and weak models, albeit with the notion of misfit here being the KL divergence.

We validate our theory across synthetic as well as real-world NLP and Vision datasets, and find that when the strong model is trained according to the stated recipe even with modest values of $k$ (e.g., 100), its performance gain is faithfully characterized by its KL misfit with the weak model. Additionally, this behavior becomes more apparent as $k$ increases, further validating our theory.

\section{Related Work} \label{sec:related-work}

Our work is complementary to a growing line of works, each of which seeks to theoretically explain the phenomenon of weak-to-strong generalization via a different lens. The work of \citet{lang2024theoreticalanalysisweaktostronggeneralization} posits that the two crucial properties governing weak-to-strong generalization are \textit{coverage expansion} and \textit{pseudolabel correction}.
The work of \citet{somerstep2024statisticalframeworkweaktostronggeneralization} formalizes weak-to-strong generalization as the problem of transferring a ``latent concept'' from the weak model to the strong model. \citet{wu2024provable} show how finetuning a linear model on Gaussian features in the overparameterized regime provably exhibits weak-to-strong generalization. \citet{shin2024weak} take a data-centric approach, and propose that data points that contain an overlap of \textit{easy} as well as \textit{hard} patterns most effectively elicit generalization. \citet{ildiz2024high} recently provide a high-dimensional analysis of knowledge distillation from surrogate models. Our work adopts the geometric viewpoint of \citet{charikar2024quantifyinggainweaktostronggeneralization}, which interprets the finetuning of the strong model with weak labels as a \textit{projection} of the weak model onto the strong model class (where the projection corresponds to minimizing a loss function), conveniently allowing, for example, to incorporate convexity assumptions.

The main takeaway from our work about the gain in weak-to-strong generalization being characterized by the disagreement between the student and teacher model (an observation originally made empirically by \citet{burns2023weaktostronggeneralizationelicitingstrong}) aligns well with the general flavor of results in co-training \citep{blum1998combining} and disagreement-based generalization bounds \citep{dasgupta2001pac, wang2017theoretical, yu2019does}. Particularly relevant also is the vast literature on knowledge distillation (e.g., \citet{hinton2015distilling, nagarajan2023student}) and self-distillation (e.g., \citet{wei2020theoretical, mobahi2020self, pareek2024understanding}). It is worth mentioning that the work of \citet{lang2024theoreticalanalysisweaktostronggeneralization} draws insights from both \citet{blum1998combining} and \citet{wei2020theoretical}.

\section{Preliminaries} \label{sec:preliminaries}

We begin by formally describing the weak-to-strong generalization setup, adopting notation from \citet{charikar2024quantifyinggainweaktostronggeneralization}).

\subsection{Weak-to-strong Generalization}

The setup constitutes a ``strong model'' and a ``weak model'', where the strong model is typically a larger, and representationally more powerful AI model than the weak model. The weak model plays the role of teacher and the strong model plays the role of student, in that the strong model is trying to learn a target function from (potentially inaccurate and noisy) evaluations made by the weak model on data. Abstractly, we think of the strong and weak models via their representation maps $h_s:\mathcal{X}\to \mathbb{R}^{d_s}$ and $h_w:\mathcal{X}\to \mathbb{R}^{d_w}$ respectively on the data domain $\mathcal{X}$. For example, $h_s$ could be a deep transformer architecture, and $h_w$ a shallow architecture. Suppose $g: \mathcal{X} \to \mathcal{Y}$ is some target function. The only signal that the strong model gets about $g$ is through evaluations of the weak model on data. Namely, it sees a dataset labeled by $f_w(h_w(\cdot))$, where $f_w:\mathbb{R}^{d_w}\to \mathcal{Y}$ is a finetuning map that the weak model has obtained, possibly after seeing a different batch of data labeled by $g$ itself. Importantly, the labels that the weak model feeds to the strong model is from a separately held-out dataset, so that the weak model does not have access to the true labels for it. The objective of the strong model then is to obtain a finetuning function $f_s$ for itself that it can compose onto its representation $h_s$, such that $f_s(h_s(\cdot))$ estimates $g$ \textit{better} than $f_w(h_w(\cdot))$.

For the theory in the main body of the paper, we make two assumptions to avoid measure-theoretic and functional-analytic complications and to simplify the exposition: (1) we assume all distributions have finite support, and (2) we restrict our attention to binary classification rather than $c$-ary classification for $c > 2$.
The main results can be generalized when $(1)$ is relaxed; however, more involved technical machinery is required (see \Cref{sec:beyond-finite}). Only minor modifications are needed when $(2)$ is relaxed (see \Cref{sec:multiple_classes}). We now set up some notation.

\subsection{Notation}
For $n \in \sN$, we denote $[n] := \set{1, \dots, n}$. $\eRp := \sR \cup \set{\infty}$. %
For a class of functions $\mathcal{F}$ and a function $h$, we write $\mathcal{F} \circ h$ for the set $\set{f \circ h}_{f \in \mathcal{F}}$. If $f$ is a function, we write $f \equiv c$ if $f$ is constantly $c$. For $p \in [0,1]$, denote $\overline{p} := 1-p$. All logarithms are taken with base $e$. $H: [0,1] \to [0, \log 2]$ is the binary Shannon entropy $H(p) := -p \log p - \overline{p} \log \overline{p}$. $\KL: [0,1]^2\to \eRp$ is the binary KL-divergence $\KL(p \| q) := p \log \frac{p}{q} + \overline{p} \log \frac{\overline{p}}{\overline{q}}$, and $\xe: [0,1]^2 \to \eRp$ is the binary cross-entropy $\xe(p \| q) := -p \log q - \overline{p} \log \overline{q}$. Note that $\xe(p \| q) = \KL(p \| q) + H(p)$. $\sigma: \sR \to (0,1)$ denotes the sigmoid function $\sigma(x) := \frac{e^x}{1 + e^x}$, and $\sigma^{-1}$ is its inverse, the logit function.

For a subset $S \subset \sR^n$, $\inte S$ denotes its interior, and $\overline{S}$ its closure. $\co S$ is its convex hull which is the intersection of all convex sets containing $S$, and $\cco S$ is its closed convex hull which is the intersection of all \textit{closed} convex sets containing $S$. Note $\cco S = \overline{\co S}$. The convex hull of a set can be expressed as the set of all $k$-convex combinations of points in $S$, with $k$ ranging through $\sN$. We use $\co^k S$ to denote all convex combinations of any $k$ elements of $S$.

We use capital letters for random variables. These are instantiated by specifying a distribution (e.g., $X \sim P_X$ signifies random variable $X$ is drawn with probability distribution $P_X$). As mentioned above, all distributions are assumed to have finite support. For a function $f$ of a random variable, $\E_X[f(X)]$ denotes the expectation of $f(X)$ over $P_X$. When no subscript is attached to $\E$, the expectation is taken with respect to all random variables in scope. The probability distribution of a random variable $X$ is written as $P_X$. For a pair of random variables $X,Y$ jointly distributed, the conditional distribution of $Y$ given $X$ is notated as $P_{Y | X}$.

\subsection{Convexity}

We will take for granted many basic results about convex functions; \citet{ekeland1999convex} is a good reference for these. For a convex function $\psi: \sR^n \to \eRp$, we write $\dom \psi$ for  $\set{x \in \R^n : \psi(x) < \infty}$. A convex function is \textit{proper} if $\dom \psi \neq \emptyset$. All convex functions in this paper will be assumed to be proper. We denote $U_\psi := \inte \dom \psi$. Over $\R^n$, all convex functions with nonempty domain are continuous over $U_\psi$. When $\psi: \sR^n \to \eRp$ is strictly convex and $C^1(U_\psi)$, $\nabla \psi$ is a homeomorphism onto its image, and plays an important role in the theory of convex duality. When such a $\psi$ is specified, we denote $x^* := \nabla \psi(x)$ for $x \in U_\psi$. Likewise, for $S \subset U_\psi$, $S^* := \nabla \psi(S)$. The Legendre dual of such a $\psi$ is $\psi^* : \R^n \to \eRp$, where $\psi^*(x^*) := \langle x, x^* \rangle - \psi(x)$ for $x \in U_\psi$, and $\infty$ otherwise. %
It is also a strictly convex function that is $C^1$ on $U_{\psi^*} = (U_\psi)^*$. Furthermore, $\psi^{**} = \psi$, and $\nabla \psi^* = (\nabla \psi)^{-1}$. As such, When $\psi$ and $\psi^*$ are distinguished, we call $U_{\psi}$ the primal space and $U_{\psi^*}$ the dual space. We refer to $\nabla \psi$ as the dual map, and $x^*$ the dual of $x \in U_{\psi}$.

\subsection{Bregman Divergences}

Our primary means of generalizing beyond the squared loss analyzed in \citet{charikar2024quantifyinggainweaktostronggeneralization} to cross-entropy and other loss functions is via Bregman divergences \cite{bregman1967relaxation}.
\begin{definition}[Bregman Divergence]
    Let $\psi: \sR^n \to \eRp$ be a strictly convex and $C^1(U_\psi)$. Then the $\psi$-Bregman divergence, $D_{\psi}: \sR^n \times U_{\psi} \to \eRp$ is defined
    \begin{align*}
        D_{\psi}(x, y) & := \psi(x) - \psi(y) - \langle x - y, y^* \rangle \\
                       & = \psi(x) + \psi^*(y^*) - \langle x, y^* \rangle.
    \end{align*}
\end{definition}

Here, $\psi$ is the \textit{generator} of $D_{\psi}$. Intuitively, $D_\psi$ measures how much the linear approximation of $\psi$ at $y$ underestimates $\psi(x)$. It is always nonnegative, and is $0$ if and only if $x = y$. The alternative expression for $D_{\psi}$ also reveals an important property: $D_{\psi}(x,y) = D_{\psi^*}(y^*, x^*)$.
Furthermore, $D_{\psi}$ is strictly convex and differentiable in its first argument. This makes it a desirable loss function in optimization algorithms \cite{Collins2002, orabona2023modernintroductiononlinelearning}.

Many commonly used loss functions arise as Bregman divergences (see ~\Cref{table:bg-ex} in \Cref{sec:bregman-divergences-table}).
While Bregman divergences are convex in their first argument, they are \textit{not necessarily} convex in their second argument. The logistic cost (\Cref{table:bg-ex}) is an example of this.

\subsubsection{Cross-Entropy and KL-Divergence}
In binary classification tasks, we work with data $X,Y \sim P_{X,Y}$, where input data $X \in \sR^d$ and labels $Y \in \set{0,1}$. Let $g: \R^d \to [0,1]$ be the conditional probability function $g(x) := P_{Y|X}(1 | x)$. Given some hypothesis class $\mathcal{F}$ of functions $\sR^d \to (0,1)$, we wish to find an $f \in \mathcal{F}$ that minimizes $\E[ \xe(g(X) \| f(X))]$. Since $g$ is fixed for a given classification task, we see that $\E[\xe(g(X) \| f(X))]$ differs from $\E[\KL(g(X) \| f(X))]$ by the task-dependent constant $\E[H(g(X))]$. Thus, for classification tasks, \textit{applying Bregman divergence theory will produce corresponding results about cross-entropy, modulo a constant}. This is the primary justification for using Bregman divergences to understand weak-to-strong learning for classification.

\subsubsection{Geometry of Bregman Divergences}

We would like to use Bregman divergences analogously to distances like the $\ell_2$ distance. However, Bregman divergences are not always symmetric. The KL-divergence is a classic example of a non-symmetric Bregman divergence. Thus, they are not valid distances. However, they do possess many geometric properties. We use two properties below to generalize the main result from~\citet{charikar2024quantifyinggainweaktostronggeneralization}. Below, $\psi$ refers to a strictly convex, $C^1(U_{\psi})$ function.

\begin{fact}[Generalized Law of Cosines \cite{chen-convergence}] \label{thm:cosines}
    Let $x,y,z \in U_\psi$. Then
    $$D_\psi(x, z) = D_\psi(x, y) + D_\psi(y, z) - \langle x - y, z^* - y^* \rangle.$$
\end{fact}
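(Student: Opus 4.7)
The plan is to prove this identity by a direct algebraic expansion, since no topological or convex-analytic machinery is actually needed beyond the definition of $D_\psi$. First I would write out both Bregman divergences on the right-hand side using the definition
$$D_\psi(a,b) = \psi(a) - \psi(b) - \langle a - b, b^* \rangle,$$
and observe that in the sum $D_\psi(x,y) + D_\psi(y,z)$, the two occurrences of $\psi(y)$ (with opposite signs) cancel. This leaves
$$D_\psi(x,y) + D_\psi(y,z) = \psi(x) - \psi(z) - \langle x-y, y^* \rangle - \langle y-z, z^* \rangle.$$

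Next I would subtract the correction term $\langle x-y, z^* - y^* \rangle$. Splitting the inner product by bilinearity, the contribution $-\langle x-y, -y^* \rangle = \langle x-y, y^* \rangle$ exactly cancels the $-\langle x-y, y^* \rangle$ term from the previous display, which is the key cancellation. What remains on the right-hand side is
$$\psi(x) - \psi(z) - \langle y-z, z^* \rangle - \langle x-y, z^* \rangle = \psi(x) - \psi(z) - \langle (x-y) + (y-z), z^* \rangle = \psi(x) - \psi(z) - \langle x-z, z^* \rangle,$$
where I used bilinearity of the inner product in its first slot to combine the two $z^*$-terms. This final expression is precisely the definition of $D_\psi(x,z)$, completing the proof.

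There is really no obstacle: the argument uses only the definition of the Bregman divergence and bilinearity of the inner product, and requires $x, y, z \in U_\psi$ only so that the duals $y^*, z^*$ are well-defined via $\nabla \psi$. Strict convexity and $C^1$ regularity of $\psi$ are not invoked in the computation itself, though they are needed for the definition of $D_\psi$ to be meaningful (and in particular for $y^*$ and $z^*$ to make sense as gradients). The identity can thus be viewed as a purely formal consequence of the ``$\psi(x) - \psi(y) - \langle x-y, y^*\rangle$'' template, and the name ``law of cosines'' reflects the analogy with the Euclidean case $\psi(x) = \tfrac{1}{2}\|x\|^2$, where $y^* = y$, $z^* = z$, and the identity reduces to the standard polarization-type relation $\|x-z\|^2 = \|x-y\|^2 + \|y-z\|^2 - 2\langle x-y, z-y\rangle$.
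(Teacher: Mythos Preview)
Your proof is correct; the algebraic expansion and cancellations are exactly right. The paper does not actually give its own proof of this statement---it is recorded as a \emph{Fact} with a citation---so there is nothing to compare against, but your direct verification from the definition is the standard argument.
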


To generalize the Pythagorean inequality for inner product spaces, we need the following notion
\begin{definition}
    Let $\mathcal{W} \subset U_{\psi}$. For $x \in U_{\psi}$, define the forward Bregman projection to be $P_{\mathcal{W}}(x) := \arg\min_{y \in \mathcal{W}} D_{\psi}(y, x)$.
\end{definition}
If $\mathcal{W}$ is convex and closed then the forward projection exists and is unique. Existence and uniqueness follow from continuity, boundedness of sublevel sets (Exercise 1 in \cite{fawzinotes}), and strict convexity of $D_{\psi}$ in its first argument. The following is a known generalization of the standard $\ell_2$ Pythagorean inequality to Bregman divergences.

\begin{fact}[Generalized Pythagorean Inequality \cite{dhillonnotes}] \label{thm:bg_pyth_main}
    Let $\mathcal{W} \subset U_{\psi}$ be a closed, convex set. Then for all $z \in U_\psi$ and $x \in \mathcal{W}$
    $$D_{\psi}(x, z) \geq D_{\psi}(x, P_{\mathcal{W}}(z)) + D_{\psi}(P_{\mathcal{W}}(z), z)$$
\end{fact}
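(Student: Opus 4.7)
The plan is to combine Fact~\ref{thm:cosines} (the generalized law of cosines) with a first-order optimality condition at $y := P_{\mathcal{W}}(z)$. Applying Fact~\ref{thm:cosines} to the triple $x, y, z \in U_\psi$ immediately yields
$$D_\psi(x,z) = D_\psi(x,y) + D_\psi(y,z) - \langle x - y,\, z^* - y^* \rangle,$$
so the inequality reduces to showing that the ``cross term'' $\langle x - y, z^* - y^*\rangle$ is nonpositive for every $x \in \mathcal{W}$. This is exactly the variational characterization of a Bregman projection onto a convex set.

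To establish this, I would fix $x \in \mathcal{W}$ and, using convexity of $\mathcal{W}$, form the segment $y_t := y + t(x-y) \in \mathcal{W}$ for $t \in [0,1]$. Because $\mathcal{W} \subset U_\psi$ and $U_\psi$ is open and convex, the segment stays inside $U_\psi$, so $\psi$ is $C^1$ along it and the function $\phi(t) := D_\psi(y_t, z)$ is differentiable on $[0,1]$. By definition of $y$ as the forward projection, $\phi$ is minimized at $t=0$, so $\phi'(0^+) \geq 0$. Computing the gradient of $D_\psi(\cdot, z)$ in its first argument gives $\nabla_1 D_\psi(u,z) = \nabla\psi(u) - \nabla\psi(z) = u^* - z^*$, hence
$$\phi'(0) = \langle x - y,\, y^* - z^* \rangle \geq 0,$$
which is equivalent to $\langle x - y, z^* - y^* \rangle \leq 0$.

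Substituting back into the law-of-cosines identity then yields
$$D_\psi(x,z) \geq D_\psi(x,y) + D_\psi(y,z) = D_\psi(x, P_{\mathcal{W}}(z)) + D_\psi(P_{\mathcal{W}}(z), z),$$
as desired. The main potential obstacle is a technical one: ensuring the one-sided derivative calculation is legitimate, i.e., that we may differentiate $\psi$ along the segment at $t=0$. This is exactly why the hypothesis $\mathcal{W} \subset U_\psi$ (rather than $\mathcal{W} \subset \dom \psi$) is important, since it puts $y$ strictly inside the differentiability region of $\psi$ and, by openness and convexity of $U_\psi$, keeps a neighborhood of $t=0$ inside the primal space. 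Existence and uniqueness of $y = P_{\mathcal{W}}(z)$ (needed to even state the argument) follow, as noted in the paper, from continuity, compactness of sublevel sets of $D_\psi(\cdot, z)$, and strict convexity of $D_\psi$ in its first argument, so no further work is required on that front.
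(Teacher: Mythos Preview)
Your argument is correct and is the standard proof of the Bregman Pythagorean inequality: apply the generalized law of cosines and then kill the cross term via the first-order optimality condition for the projection onto a convex set. Note, however, that the paper does not supply its own proof of this statement; it records it as a known fact with a citation to \cite{dhillonnotes}, so there is no ``paper's proof'' to compare against beyond observing that your derivation is exactly the classical one underlying that reference.
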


\subsubsection{Expectations of Bregman Divergences}
\label{sec:exp_bregman}

Suppose we have a random variable $X \sim P_X$ with finite support $\mathcal{X}$. Functions $\mathcal{X} \to \sR^n$ form an $n|\mathcal{X}|$ dimensional vector space that we give the $L^2$ norm $\sqrt{\E \left[ \|f(X) - g(X)\|_2^2 \right]}$. From strictly convex, $C^1$, $\psi: \sR^n \to \eRp$, we get a convex functional $\Psi$ defined $\Psi[f] := \E[\psi(f(X))]$. Computing the dual map, Legendre dual, and Bregman divergence of $\Psi$, we get $f^* = \nabla \psi \circ f$, $\Psi^*[f^*] = \E \psi^*(f^*(X))$, and $D_\Psi(f,g) = \E D_\psi(f(X), g(X))$. Thus, the previous results about Bregman divergences also apply to \textit{expectations} of Bregman divergences when $|\mathcal{X}| < \infty$. See Appendix \Cref{sec:beyond-finite} for a discussion when $\mathcal{X}$ is not finite.

\section{Main Results} \label{sec:main-result}

We are now in a position to state our main results on weak-to-strong generalization using Bregman divergence theory. %

The first result is a direct application of ~\Cref{thm:bg_pyth_main}.

\begin{theorem}[Bregman Misfit-Gain Inequality] \label{thm:main-result-conv-bg}
    Let $\psi: \sR^n \to \eRp$ be a proper convex function s.t. $U_\psi \neq \emptyset$. Let $h_s: \mathcal{X} \to \sR^{d_s}$ and $h_w: \mathcal{X} \to \sR^{d_w}$ be the strong and weak learner representations respectively. Let $f_w: \sR^{d_w} \to U_\psi$ be the weak model finetune layer, and $g: \mathcal{X} \to U_\psi$ be the target function. Let $\mathcal{F}$ be a class of functions mapping $\sR^{d_s} \to U_\psi$. If the following hold:
    \begin{enumerate}
        \item (Realizability) $\exists f_* \in \mathcal{F}$ s.t. $g = f_* \circ h_s$,
        \item (Convexity) $\mathcal{F}$ is a convex set of functions,
        \item (Sequential Consistency) For $y \in U_\psi$ fixed, if $D_{\psi}(x_n, y) \to 0$, then $x_n \to y$,
    \end{enumerate}
    then for any $\epsilon > 0$, there exists $\delta > 0$ such that for all $f_s \in \mathcal{F}$ that satisfy
    \begin{align*}
        \E \big[ D_{\psi} & \left( f_s(h_s(X)) , f_w( h_w(X)) \right) \big]                                                                        \\
                                    & \leq \inf_{f \in \mathcal{F}} \E \left[ D_{\psi}\left(f( h_s(X) ) , f_w ( h_w(X) ) \right) \right] + \delta,
    \end{align*}
    we have
    \begin{align} \label{eq:misfit-ineq-bg}
        \E \big[D_{\psi} & \left(g(X) , f_s(h_s(X)) \right) \big] \nonumber                                     \\
        \leq                    & \; \E \left[D_{\psi}\left(g(X) , f_w(h_w(X))\right)\right] \nonumber            \\
                                & - \E \left[D_{\psi}\left(f_s(h_s(X)) , f_w(h_w(X))\right)\right] + \epsilon.
    \end{align}

\end{theorem}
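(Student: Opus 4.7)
The plan is to cast weak-to-strong generalization as a forward Bregman projection problem and combine the Generalized Pythagorean Inequality (Fact~\ref{thm:bg_pyth_main}) with the Generalized Law of Cosines (Fact~\ref{thm:cosines}). Define $\mathcal{V} := \mathcal{F} \circ h_s$ and $w := f_w \circ h_w$; convexity of $\mathcal{F}$ makes $\mathcal{V}$ convex, and realizability places $g \in \mathcal{V}$. Since $\mathcal{X}$ is finite, Section~\ref{sec:exp_bregman} lets me lift $\psi$ to the functional $\Psi[\phi] := \E[\psi(\phi(X))]$ whose Bregman divergence is the expected divergence $D_\Psi(\phi,\phi') = \E[D_\psi(\phi(X),\phi'(X))]$, so the geometric machinery for Bregman divergences applies verbatim at the functional level on a finite-dimensional Hilbert space. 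Let $P := \argmin_{\phi \in \overline{\mathcal{V}}} D_\Psi(\phi, w)$ denote the unique forward projection of $w$ onto the closed convex hull; by density, $D_\Psi(P, w) = \inf_{\phi \in \mathcal{V}} D_\Psi(\phi, w)$.

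First I apply the Pythagorean inequality with $x = \phi_s$ and with $x = g$, both of which lie in $\overline{\mathcal{V}}$:
\[ D_\Psi(\phi_s, P) \leq D_\Psi(\phi_s, w) - D_\Psi(P, w) \leq \delta, \qquad D_\Psi(g, P) \leq D_\Psi(g, w) - D_\Psi(P, w). \]
The upper bound $\delta$ in the first inequality uses the $\delta$-approximate optimality of $\phi_s$. Combining $D_\Psi(\phi_s, P) \leq \delta$ with Sequential Consistency (applied pointwise on the finite support of $X$) gives $\phi_s \to P$ pointwise as $\delta \to 0$, and continuity of $\nabla \psi$ then yields $\phi_s^* \to P^*$ pointwise as well. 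Next I apply the Law of Cosines at $y = P$ with $x = g, z = \phi_s$, chain the Pythagorean bound on $D_\Psi(g, P)$ through it, and rewrite $-D_\Psi(P, w) = -D_\Psi(\phi_s, w) + (D_\Psi(\phi_s, w) - D_\Psi(P, w))$ to obtain
\[ D_\Psi(g, \phi_s) \leq D_\Psi(g, w) - D_\Psi(\phi_s, w) + E_1 + E_2 + E_3, \]
where $E_1 := D_\Psi(\phi_s, w) - D_\Psi(P, w) \in [0,\delta]$, $E_2 := D_\Psi(P, \phi_s)$, and $E_3 := -\E[\langle g(X) - P(X), \phi_s(X)^* - P(X)^*\rangle]$. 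Continuity of $\psi$ forces $E_2 \to 0$, and Cauchy--Schwarz together with pointwise convergence of $\phi_s^*$ on a finite set forces $|E_3| \to 0$, so choosing $\delta$ small enough makes the three error terms sum to at most $\epsilon$.

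The main obstacle is guaranteeing that $P \in U_\psi$, since the dual $P^*$ and the divergences involving $P$ are otherwise ill-defined, and Sequential Consistency is stated only for targets inside $U_\psi$: the closure $\overline{\mathcal{V}}$ taken in $\sR^n$ can in principle touch $\partial U_\psi$. One remedy is to strengthen the hypotheses---e.g., to require $\overline{\mathcal{V}} \subset U_\psi$, which is automatic for squared loss and holds whenever the outputs of $\mathcal{F}$ stay uniformly bounded away from $\partial U_\psi$---and another is to avoid the exact projection altogether, replacing $P$ by a sufficiently near-optimal $\tilde P \in \mathcal{V}$ and rerunning the same bookkeeping with Cauchy--Schwarz-type bounds on the resulting extra terms.
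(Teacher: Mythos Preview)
Your proposal is correct and follows essentially the same route as the paper: project $w$ onto $\overline{\mathcal{V}}$, apply the Pythagorean inequality twice (once with $g$, once with $\phi_s$) to get both the exact inequality at $P$ and the bound $D_\Psi(\phi_s,P)\le\delta$, invoke sequential consistency pointwise on the finite support to force $\phi_s\to P$, and then transfer the inequality from $P$ to $\phi_s$---your explicit Law-of-Cosines decomposition into $E_1,E_2,E_3$ is just an unpacking of the paper's one-line ``by continuity of $\E[D_\psi(\cdot,f_w(h_w(X)))]$ and $\E[D_\psi(g(X),\cdot)]$'' step. You are also right to flag that $P\in U_\psi$ is needed for $P^*$, $D_\Psi(g,P)$, and the sequential-consistency hypothesis to make sense; the paper's appendix proof tacitly assumes this as well, and in the downstream applications it is secured by the regularization condition of Theorem~\ref{thm:main-result}.
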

\begin{proof}[Proof Sketch]
    Because $\mathcal{F}$ is convex, we get that $\mathcal{F} \circ h_s$ is also convex. Since $g \in \overline{\mathcal{F} \circ h_s}$, by \Cref{thm:bg_pyth_main}, we have that $g_{\text{proj}} := P_{\overline{\mathcal{F} \circ h_s}}(f_w \circ h_w)$ uniquely exists and satisfies the Bregman Pythagorean inequality. Now, by continuity of both $\E[D_{\psi}(\cdot  , f_w(h_w(X)))]$ and $\E[D_{\psi}(g(X) , \cdot)]$ there exists an $\epsilon_1 > 0$ s.t. if $\E \left[ \|f(h_s(X)) - g_{\text{proj}}(X)\|_2^2 \right] < \epsilon_1$, then $f$ almost satisfies the Pythagorean inequality with error $\epsilon$. Choosing $\delta$ sufficiently small, we can apply the Pythagorean inequality to bound $\E [D_{\psi} (f(h_s(X))  , g_{\text{proj}}(X))]$. Sequential consistency will then make $\E \left[ \| f(h_s(X)) - g_{\text{proj}}(X)\|_2^2 \right] < \epsilon_1$.
\end{proof}

Sequential consistency is a common assumption in the literature when defining Bregman divergences \cite{Reem_2018, Bauschke2003, Butnariu2003}, and is satisfied with very weak assumptions on $\psi$. ~\cref{thm:main-result-conv-bg} generalizes Theorem~1 in \citet{charikar2024quantifyinggainweaktostronggeneralization}, since $\ell_2$ is a Bregman divergence (and clearly sequentially consistent).

As a corollary to Theorem~\ref{thm:main-result-conv-bg}, we also obtain the misfit-gain inequality for cross-entropy.

\begin{corollary}[Cross-Entropy Misfit-Gain Inequality]
    \label{thm:main-result-conv}
    Let $h_s: \mathcal{X} \to \sR^{d_s}$ and $h_w: \mathcal{X} \to \sR^{d_w}$ be the strong and weak model representations respectively. Let $g: \mathcal{X} \to (0,1)$ be the target function. Let $\mathcal{F}$ be a class of functions mapping $\mathbb{R}^{d_s} \to (0,1)$. Let $f_w: \mathbb{R}^{d_w} \to (0,1)$ be the classifier for the weak model. If the following hold:
    \begin{enumerate}
        \item (Realizability) $\exists f_* \in \mathcal{F}$ so that $g = f_* \circ h_s$,
        \item (Convexity) $\mathcal{F}$ is a convex set of functions,
    \end{enumerate}
    then for any $\epsilon > 0$, there exists $\delta > 0$ such that for all $f_s \in \mathcal{F}$ that satisfy
    \begin{align*}
        \E \big[ &\KL(  f_s (h_s(X)) \| f_w ( h_w(X) )) \big]                                                                           \\
                              & \leq \; \inf_{f \in \mathcal{F}} \E\left[ \KL\left(f ( h_s(X) ) \| f_w ( h_w(X) )\right) \right] + \delta,
    \end{align*}
    we have
    \begin{align} \label{eq:misfit-ineq}
        \E\big[\xe & (g(X) \| f_s ( h_s(X) )) \big] \leq \E \left[\xe\left(g(X) \| f_w ( h_w(X) )\right)\right] \nonumber               \\
                           & - \E \left[\KL\left(f_s ( h_s(X) ) \| f_w ( h_w(X) )\right)\right] + \epsilon.
    \end{align}
\end{corollary}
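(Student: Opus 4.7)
My plan is to realize $\KL(\cdot \| \cdot)$ as the Bregman divergence generated by the negative binary entropy, check that the hypotheses of \Cref{thm:main-result-conv-bg} are satisfied for this generator, and then translate the resulting KL inequality into a cross-entropy inequality by adding an $f$-independent entropy term.

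Concretely, I would define $\psi: \sR \to \eRp$ by $\psi(p) := p\log p + \overline{p}\log \overline{p}$ for $p \in [0,1]$ (with $0\log 0 := 0$) and $\psi(p) := \infty$ otherwise. This $\psi$ is proper, strictly convex on $[0,1]$, and $C^1$ on $U_\psi = (0,1)$, with $\psi'(p) = \log(p/\overline{p}) = \sigma^{-1}(p)$. A short direct computation (expanding $\psi(p) - \psi(q) - \psi'(q)(p-q)$) gives $D_\psi(p,q) = \KL(p\|q)$, so in particular $\E[D_\psi(f(h_s(X)), f_w(h_w(X)))] = \E[\KL(f(h_s(X)) \| f_w(h_w(X)))]$ and similarly with $g$ in place of $f \circ h_s$.

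Next, I would verify the three hypotheses of \Cref{thm:main-result-conv-bg}. Realizability and convexity are taken verbatim from the corollary's hypotheses, and the codomains of $g$, $f_s$, $f_w$, and every $f \in \mathcal{F}$ lie in $(0,1) = U_\psi$, so the setup is consistent. The only non-trivial check is sequential consistency, which amounts to showing: for fixed $q \in (0,1)$, if $\KL(p_n \| q) \to 0$ then $p_n \to q$. This is immediate from Pinsker's inequality $\KL(p\|q) \ge 2(p-q)^2$ in the binary case.

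With these in hand, \Cref{thm:main-result-conv-bg} yields, for any $\epsilon > 0$, some $\delta > 0$ such that every $\delta$-approximate minimizer $f_s \in \mathcal{F}$ of $f \mapsto \E[\KL(f(h_s(X)) \| f_w(h_w(X)))]$ satisfies
\begin{align*}
\E[\KL(g(X) \| f_s(h_s(X)))]
&\le \E[\KL(g(X) \| f_w(h_w(X)))] \\
&\quad - \E[\KL(f_s(h_s(X)) \| f_w(h_w(X)))] + \epsilon.
\end{align*}
To pass from KL to cross-entropy I would add the $f_s, f_w$-independent constant $\E[H(g(X))]$ to both sides and use the identity $\xe(p\|q) = \KL(p\|q) + H(p)$ on each term in the first argument that contains $g(X)$. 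The misfit term has $f_s(h_s(X))$ in the first argument, not $g(X)$, so it is unaffected, and the result is exactly inequality~(\ref{eq:misfit-ineq}). No step here is a serious obstacle; the only mildly delicate point is verifying sequential consistency, which Pinsker handles cleanly.
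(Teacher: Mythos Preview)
Your proposal is correct and follows essentially the same approach as the paper: identify $\KL$ as the Bregman divergence generated by the negative binary entropy, invoke Pinsker's inequality for sequential consistency, apply \Cref{thm:main-result-conv-bg}, and pass between $\KL$ and $\xe$ via the additive constant $\E[H(g(X))]$. The only cosmetic difference is ordering---the paper subtracts $\E[H(g(X))]$ first to reduce \eqref{eq:misfit-ineq} to a $\KL$ statement, while you apply the theorem first and add the entropy afterward---but these are logically equivalent.
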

\begin{proof}[Proof Sketch]
    We can first rewrite \Cref{eq:misfit-ineq} in terms of $\KL$ instead of $\xe$ by subtracting $H(g(X))$ from both sides. Since $\KL$ is sequentially consistent by Pinsker's inequality, we can apply Theorem~\ref{thm:main-result-conv-bg}.
\end{proof}

We note that if $$f_s = \argmin_{f \in \mathcal{F}} \E\left[ \KL(f  (h_s(X)) \| f_w ( h_w(X))) \right],$$ then \Cref{eq:misfit-ineq} directly holds with no $\epsilon$ term. Such an $f_s$ uniquely exists if $\mathcal{F} \circ h_s$ is closed. We refer to $\E \left[\KL(f_s(h_s(X)) \| f_w(h_w(X)))\right]$ as the ``misfit'' term and $\E\left[\xe(g(X) \| f_w (h_w(X)))\right] -  \E\left[\xe(g(X) \| f_s( h_s(X)))\right]$ as the ``gain'' term.

In practice, $\mathcal{F}$ is usually not convex. For example, when learning a linear probe \cite{burns2023weaktostronggeneralizationelicitingstrong} on top of $h_s$ together with the sigmoid, $\mathcal{F}^*$ is convex, but $\mathcal{F}$ itself is not. Nevertheless, our main observation in this case is that Theorem~\ref{thm:main-result-conv} still applies to $\co \mathcal{F}$! By Caratheodory's theorem, $\co \mathcal{F} = \co^{|\mathcal{X}| + 1} \mathcal{F}$, and hence we can simply consider convex combinations of $|\mathcal{X}| + 1$ functions in $\mathcal{F}$ to represent $\co \mathcal{F}$. However, as $|\mathcal{X}| \to \infty$, this representation is not computationally tractable. We therefore suggest remedying this by attempting to project the weak model onto $\co^k \mathcal{F} \circ h_s$, and show that the error in the misfit-gain inequality can still be bounded by a decreasing function in $k$ that is independent of $|\mathcal{X}|$. Concretely, we show that:

\begin{theorem} \label{thm:main-result}
    Let $h_s, h_w, f_w, g$ be as in Theorem~\ref{thm:main-result-conv}. Let $\mathcal{F}$ be a class of functions mapping $\R^{d_s} \to (0,1)$. If the following hold
    \begin{enumerate}
        \item (Realizability) $\exists f_* \in \mathcal{F}$ so that $g = f_* \circ h_s$,
        \item (Regularization) $\mathcal{F}$ satisfies $$\max\left( \E \left[\sup_{f \in \mathcal{F}} 1/f(X) \right], \E \left[ \sup_{f \in \mathcal{F}} 1/\overline{f(X)}) \right] \right) < \infty,$$
    \end{enumerate}
    then for any $k \in \sN$, there exists $\delta > 0$ such that for all $f_s \in \co^k \mathcal{F}$ that satisfy
    \begin{align*}
        \E \big[ \KL(&f_s  (h_s(X)) \| f_w (h_w(X))) \big]                                                             \\
        \leq             &\;  \inf_{f \in \co^k \mathcal{F}} \E\left[ \KL(f ( h_s(X)) \| f_w  (h_w(X))) \right] + \delta,
    \end{align*}
    we have
    \begin{align}
        \label{eqn:misfit-inequality-evaluation}
        \E&\big[  \xe(g(X) \| f_s ( h_s(X)))\big] \leq \E \left[\xe(g(X) \| f_w ( h_w(X)))\right] \nonumber \\
                & - \E \left[\KL(f_s ( h_s(X)) \| f_w (h_w(X)))\right] + O(1/\sqrt{k}).        
    \end{align}
\end{theorem}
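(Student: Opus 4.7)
The plan is to reduce to \cref{thm:main-result-conv} applied to $\co \mathcal{F}$ (which is convex) via a Maurey-style empirical approximation, incurring only an additional $O(1/\sqrt{k})$ slack. Writing $q := f_w \circ h_w$, I would first set $f_* := \arg\min_{f \in \overline{\co \mathcal{F} \circ h_s}} \E[\KL(f(X) \| q(X))]$, the Bregman projection of $q$ onto the closed convex hull $\overline{\co \mathcal{F} \circ h_s}$; existence and uniqueness follow as in the discussion after \cref{thm:bg_pyth_main}. By realizability, $g \in \mathcal{F} \circ h_s \subseteq \overline{\co \mathcal{F} \circ h_s}$, so \cref{thm:bg_pyth_main} yields the exact misfit-gain inequality for $f_*$: $\E[\KL(g \| q)] \geq \E[\KL(g \| f_*)] + \E[\KL(f_* \| q)]$. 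The goal is then to show that any near-minimizer $f_s \in \co^k \mathcal{F}$ (near-optimal over the non-convex $\co^k \mathcal{F}$) inherits this inequality up to additive $O(1/\sqrt{k})$ slack.

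The key new ingredient I would use is Maurey's empirical method. Writing an element of $\co \mathcal{F} \circ h_s$ arbitrarily close to $f_*$ as $\sum_i \lambda_i f_i \circ h_s$, sample $I_1, \ldots, I_k$ i.i.d.\ with $\Pr(I = i) = \lambda_i$ and set $\hat{f}^{(k)} := \frac{1}{k} \sum_{j=1}^k f_{I_j} \in \co^k \mathcal{F}$. Because each $f_i(h_s(x)) \in (0,1)$, the sampling variance at every $x$ is at most $1/(4k)$, so $\E_X[(\hat{f}^{(k)}(h_s(X)) - f_*(X))^2] \leq 1/(4k)$ in expectation over the sampling; thus some realization $\hat{f}^{(k)} \in \co^k \mathcal{F}$ achieves $L^2(P_X)$ distance $O(1/\sqrt{k})$ from $f_*$. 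Invoking the regularization hypothesis via the integrable envelope $S(X) := \sup_{f \in \mathcal{F}} 1/f(h_s(X)) + \sup_{f \in \mathcal{F}} 1/\overline{f(h_s(X))}$, a Lipschitz-type bound $|\KL(p \| q(X)) - \KL(p' \| q(X))| \leq S(X)\, |p - p'|$ combined with Cauchy--Schwarz would lift this to $\E[\KL(\hat{f}^{(k)} \| q)] - \E[\KL(f_* \| q)] = O(1/\sqrt{k})$. Combined with $\delta$-optimality of $f_s$ over $\co^k \mathcal{F}$, this gives $\E[\KL(f_s \| q)] - \E[\KL(f_* \| q)] \leq \delta + O(1/\sqrt{k})$.

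Since $f_s \in \co \mathcal{F} \circ h_s$, applying \cref{thm:bg_pyth_main} with projection $f_*$ would yield $\E[\KL(f_s \| f_*)] \leq \delta + O(1/\sqrt{k})$, and Pinsker's inequality then gives $\E[(f_s - f_*)^2] = O(\delta + 1/\sqrt{k})$. By the Generalized Law of Cosines (\cref{thm:cosines}) applied to the triple $(g, f_s, q)$, the target inequality is equivalent to showing $\E[\langle g - f_s, q^* - f_s^* \rangle] \leq O(1/\sqrt{k})$, which I would prove via the decomposition $\langle g - f_s, q^* - f_s^* \rangle = \langle g - f_*, q^* - f_*^* \rangle + \langle g - f_*, f_*^* - f_s^* \rangle + \langle f_* - f_s, q^* - f_s^* \rangle$. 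The first term has nonpositive expectation by first-order optimality of the Bregman projection $f_*$ (using $g \in \co \mathcal{F} \circ h_s$); the remaining ``cross terms'' involve $f_* - f_s$ and would be bounded using its $L^2$-closeness together with the envelope $S$, controlling logit differences $f_*^* - f_s^*$ via the mean value theorem on $\sigma^{-1}$. Choosing $\delta$ sufficiently small as a function of $k$ then delivers the conclusion.

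The hard part will be this final lifting step: converting $L^2$-closeness between $f_s$ and $f_*$ into bounds on $\KL(f_* \| f_s)$ and on logit-space differences $f_*^* - f_s^*$, since neither $\KL$ nor $\sigma^{-1}$ is uniformly Lipschitz on $(0,1)$ under only the $L^1$ integrability of $S$. The $O(1/\sqrt{k})$ rate emerges from careful balancing of H\"older-type inequalities against Maurey's $L^2$ bound, with moments of $S$ absorbed into the implicit constant in $O(\cdot)$.
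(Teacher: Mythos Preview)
Your overall architecture matches the paper's: project $f_w\circ h_w$ forward onto $\cco(\mathcal{F}\circ h_s)$, use Maurey-type sampling to relate $\co^k\mathcal{F}$ to $\co\mathcal{F}$, apply the Pythagorean inequality twice (once with $g$, once with $f_s$ in the role of $g$), and finish with a mean-value/Lipschitz argument transferring closeness of $f_s$ to the projection into the final inequality. Your Law-of-Cosines decomposition in the last step is a valid reorganization of what the paper does more directly: the paper bounds $|\E[\KL(g\|f_*)]-\E[\KL(g\|f_s)]|\le\E\big[\max_i|\log f_{*,i}-\log f_{s,i}|\big]$ and then controls the log-difference by $\sqrt{2\KL(f_s\|f_*)}/\min_i(f_{*,i},f_{s,i})$ via Pinsker plus the mean value theorem. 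That route never introduces $q^*$.

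There is, however, a genuine gap in your Maurey step. The Lipschitz inequality you assert, $|\KL(p\|q(X))-\KL(p'\|q(X))|\le S(X)\,|p-p'|$, is false with your envelope $S$: since $\partial_p\KL(p\|q)=\sigma^{-1}(p)-\sigma^{-1}(q)$, the Lipschitz constant on the segment $[p,p']$ necessarily contains the term $|\sigma^{-1}(q(X))|$, which depends on the weak model $f_w$ and is \emph{not} controlled by the regularization assumption on $\mathcal{F}$. The paper circumvents this with a Jensen-gap identity: because the $w$-dependent part of $D_\psi(\cdot,w)$ is affine in the first argument, for any $w$ one has
\[
\E_{\text{sample}}\big[D_\psi(\overline{Z}^{(k)},w)\big]-D_\psi(\E Z,w)=\E_{\text{sample}}\big[\psi(\overline{Z}^{(k)})\big]-\psi(\E Z),
\]
and the probabilistic method is applied to this $w$-free quantity directly rather than to the $L^2$ distance. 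Bounding the Jensen gap of $\psi$ via the curvature estimate $\psi''(p)=1/p+1/(1-p)\le S(X)$ then gives an $O(1/k)$ approximation error with constant $\tfrac{1}{8}\E[S]$, independent of both $f_w$ and $|\mathcal{X}|$. By first extracting a realization with small $L^2$ distance and only afterwards comparing $\KL$-values, you forfeit this cancellation, and the resulting constant would depend on the weak model---which defeats the purpose of the theorem. The same $q^*$-contamination reappears in your third cross term $\langle f_*-f_s,\, q^*-f_s^*\rangle$; the paper's direct log-difference bound avoids it entirely.
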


The proof of ~\Cref{thm:main-result}, inspired from the work of \citet{ZEEVI199799}, is more involved, and is a careful application of the probabilistic method; we defer it to the Appendix as Theorem~\ref{thm:multi-main-result} where we prove it in the multi-class setting; in particular, with $c$ classes, the error term becomes $O(\sqrt{c / k})$. %

We note that both ~\Cref{thm:main-result-conv,,thm:main-result} make no assumption on the weak model, and only the realizability assumption on the target. Theorem~\ref{thm:main-result} makes only a mild assumption on $\mathcal{F}$ that can be enforced by regularizing the models in $\mathcal{F}$ (e.g., via an $\ell_2$ penalty). We emphasize however that the inequality does not guarantee \textit{significant} weak-to-strong generalization for any weak model. If $f_w \in \mathcal{F}$, %
then the $f_s$ obtained by either theorems will be $f_w$ itself, and the misfit term becomes $0$. However, ~\Cref{thm:main-result-conv,,thm:main-result} allow us to \textit{quantify} how much weak-to-strong generalization we should expect to see. All expectations involved can be estimated on a hold-out dataset, yielding a bound on the gain in terms of empirical misfit, modulo estimation error. This quantification is in the \textit{loss} of the learned strong model relative to the (realizable) ground truth data. Since a reasonable loss function is correlated with other error metrics like accuracy, we should expect to empirically see improvements in accuracy with increases in loss misfit. However, a relationship between accuracy and loss misfit is \textit{not} theoretically guaranteed.

\Cref{thm:main-result} provides a concrete recipe for weak-to-strong generalization that allows for a quantitative handle on the performance gap between the weak and strong model. In this recipe, we finetune a convex combination of $k$ functions from $\mathcal{F}$ on top of the strong model representation. The objective of this finetuning is the empirical mean of the KL divergence between the strong and weak model output. Importantly, the strong model's output is in the \textit{first} argument of the KL divergence. This is in contradistinction to the standard weak supervision objective, wherein the weak supervisor's output is in the first argument. However, this distinction provably leads to a performance gain, provided $k$ is not too small, and the empirical estimates are accurate. In the next section, we empirically validate this recipe.

\section{Experiments}
\label{sec:experiments}

We conduct experiments on synthetic data similar to \citet{charikar2024quantifyinggainweaktostronggeneralization}, and also NLP and Vision datasets considered originally in the work of \citet{burns2023weaktostronggeneralizationelicitingstrong}. %

\subsection{Synthetic Data Experiments}
\label{sec:synthetic-experiments}
We follow the setup in \citet{charikar2024quantifyinggainweaktostronggeneralization} for the synthetic  experiments. We assume that the target $g$ takes the form $f(h^\star(\cdot))$ for some ground-truth representation map $h^\star$ and finetuning function $f \in \gF$. We set $h^\star:\R^8 \to \R^{16}$ to a randomly initialized MLP with 5 hidden layers and ReLU activations, where the hidden size is $16$. We choose $\mathcal{F}$ to be the set of $c$-class logistic regression models on $\R^{16}$. Namely,
\begin{align*}
                       & \gF = \{x \mapsto \sigma(Wx):W \in \R^{(c-1)\times 16}\},                   \\
    \text{where} \quad & \sigma(z^*_1,\dots,z^*_{c-1}):=\mathrm{softmax}([0,z^*_1,\dots,z^*_{c-1}]).
\end{align*}
The marginal $\gP$ of the data is $\gN(0, \nu^2I)$; we set $\nu=100$.

\paragraph{Pretraining.} The class of strong and weak model representations, $\gH_s$ and $\gH_w$, are respectively set to the class of 8-layer and 2-layer MLPs mapping $\R^8 \to \R^{16}$, again with ReLU activations and hidden layer size $16$. To obtain $h_s$ and $h_w$, we first randomly sample $T=10$ maps $f^{(1)},\dots,f^{(T)}$ from $\gF$, and generate data $\{x^{(i)}_j, y^{(i)}_j\}_{j=1}^{N_r}$ for each, where $N_r=2000$. Here, every $x^{(i)}_j \sim \gP$, and $y^{(i)}_j = \sigma(f^{(i)}(h^\star(x^{(i)}_j)))$. Thereafter, the parameters of $h_s$ and $h_w$ are obtained by performing gradient descent to optimize the cross-entropy loss:
\begin{align}
    h_k & = \argmin_{h \in \gH_k} \frac{1}{T N_r}\sum_{i=1}^T\sum_{j=1}^{N_r}\xe(y^{(i)}_j \| \sigma(f^{(i)}(h(x^{(i)}_j)))) \nonumber \\
        & \text{for } k \in \{w,s\}.
\end{align}

\paragraph{Weak Model Finetuning.} Next, we finetune the weak model on $M=100$ fresh finetuning tasks $f^{(1)},\dots,f^{(M)}$ from $\gF$. To do so, we again generate $\{x^{(i)}_j, y^{(i)}_j\}_{j=1}^{N_f}$ for every $i \in [M]$, where $N_f=2000$, each $x^{(i)}_j \sim \gP$ and $y^{(i)}_j = \sigma(f^{(i)}(h^\star(x^{(i)}_j)))$. The weak model representation $h_w$ that was obtained in the pretraining step is held frozen, and the parameters in the final linear layer are obtained via gradient descent to minimize the cross-entropy loss:
\begin{align}
    \label{eqn:synthetic-weak-model-finetuning}
    f^{(i)}_w & = \argmin_{f \in \gF} \frac{1}{N_f}\sum_{j=1}^{N_f}\xe(y^{(i)}_j \|  \sigma(f (h_w(x^{(i)}_j)) ) ).
\end{align}

\paragraph{Weak-to-Strong Supervision.} For each finetuning task above, we obtain data labeled by the weak supervisor as follows. For every $i \in [M]$, we generate  $\{\tilde{x}^{(i)}_j, \tilde{y}^{(i)}_j\}_{j=1}^{N_f}$, where $\tilde{x}^{(i)}_j \sim \gP$ and $\tilde{y}^{(i)}_j = \sigma(f^{(i)}_w(h_w(\tilde{x}^{(i)}_j)))$. This data is fed to the strong model, which goes on to learn:
\begin{align}
    \label{eqn:synthetic-wts-finetuning}
    f^{(i)}_{s} & = \argmin_{\substack{f_1,\dots,f_k \in \gF                            \\ \lambda \in (0,1)^k: \sum_{a}\lambda_a=1}}\frac{1}{N_f}\sum_{j=1}^{N_f}\KL\left(\sum_{a=1}^k \lambda_a  z^{(i,j)}_a \bigg\| \tilde{y}^{(i)}_j\right), \nonumber \\
                & \text{where} \quad z^{(i,j)}_a = \sigma(f_a(h_s(\tilde{x}^{(i)}_j))).
\end{align}
Namely, we optimize over a convex combination of $k$ logistic regression heads. Importantly, observe that the order of arguments in $\KL$ in \Cref{eqn:synthetic-wts-finetuning} above is flipped from that in \Cref{eqn:synthetic-weak-model-finetuning}, in keeping with the recipe given by \Cref{thm:main-result}. We set $k=100$.

\paragraph{Evaluation.} To evaluate that the inequality given by \Cref{thm:main-result} holds, for each task $i \in [M]$, we estimate the three expectations in \Cref{eqn:misfit-inequality-evaluation} from a new sample of size $N_f$. Our result says that upto an error term of $O(\sqrt{c/k})$, %
\begin{align}
    \label{eqn:synthetic-pythagorean-inequality}
     & \E\left[\xe(g(X) \| f_w ( h_w(X)))\right] - \E \left[\xe(g(X) \| f_s ( h_s(X)))\right] \nonumber \\
     & \qquad\ge \E \left[\KL(f_s ( h_s(X)) \| f_w (h_w(X)))\right].
\end{align}
In \Cref{fig:synthetic-experiments}, we plot the LHS on the y-axis and the RHS on the x-axis, for the experiment above performed with $c=2,10,50,100$. We can observe that \Cref{eqn:synthetic-pythagorean-inequality} is exhibited more or less with equality, for both the binary $(c=2)$ as well as multiclass cases $(c>2)$. We do note that the plot gets noisier as $c$ grows to $100$. These plots show the same trend as for the squared loss in \citet{charikar2024quantifyinggainweaktostronggeneralization}.
\begin{figure*}[t]
    \begin{subfigure}{.24\textwidth}
        \centering
        \includegraphics[width=1\linewidth]{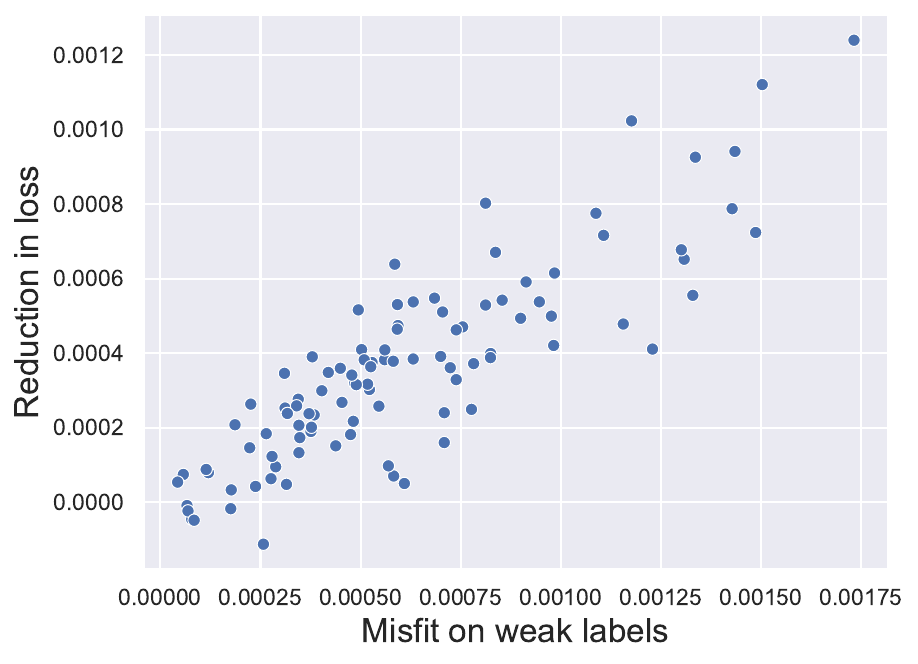}
        \caption{$c=2$}
        \label{fig:2-5-8-binary}
    \end{subfigure}
    \begin{subfigure}{.24\textwidth}
        \centering
        \includegraphics[width=1\linewidth]{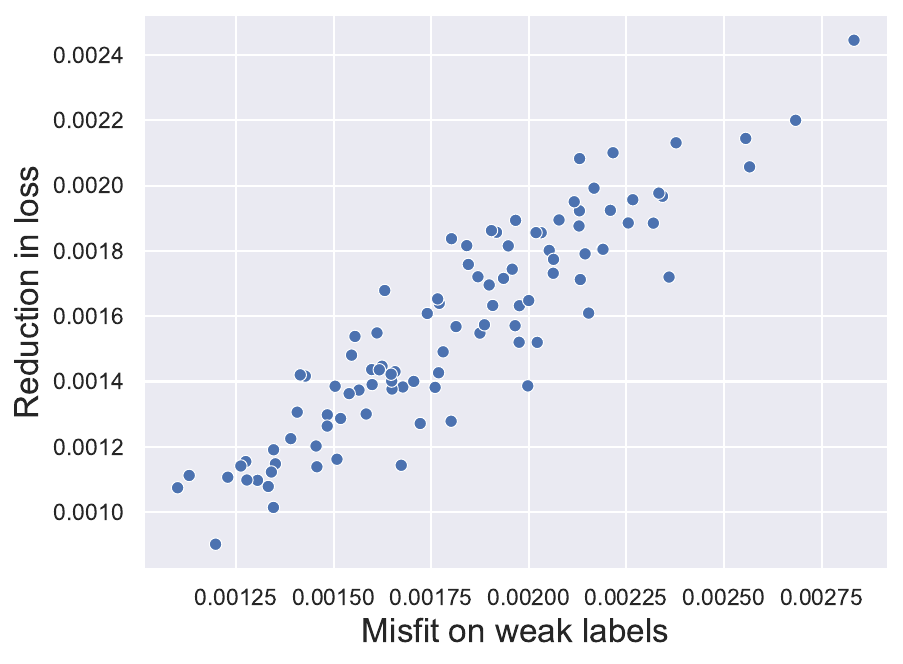}
        \caption{$c=10$}
        \label{fig:2-5-8-multiclass-10}
    \end{subfigure}
    \begin{subfigure}{.24\textwidth}
        \centering
        \includegraphics[width=1\linewidth]{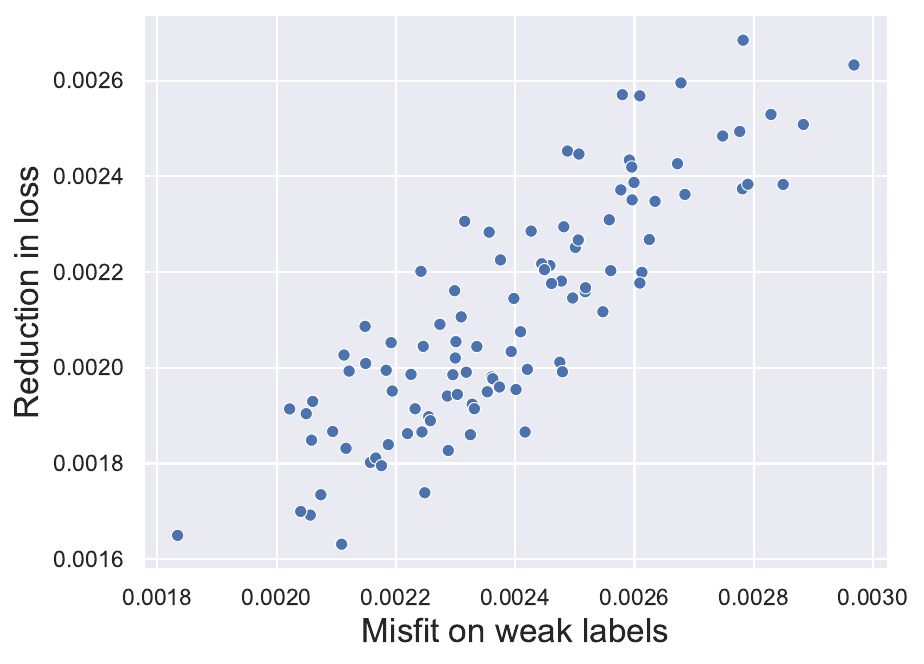}
        \caption{$c=50$}
        \label{fig:2-5-8-multiclass-50}
    \end{subfigure}
    \begin{subfigure}{.24\textwidth}
        \centering
        \includegraphics[width=1\linewidth]{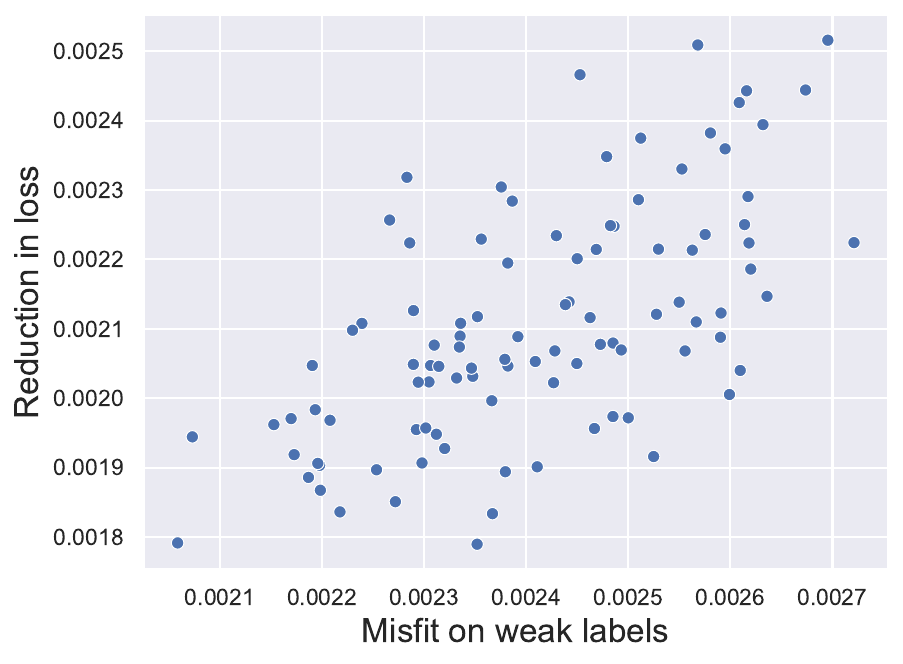}
        \caption{$c=100$}
        \label{fig:2-5-8-multiclass-100}
    \end{subfigure}
    \caption{Synthetic data experiments. The Gain and Misfit closely track each other. For $c=100$, we see that the correlation between misfit and gain weakens, suggested also by the $O(\sqrt{c / k})$ error term from \Cref{thm:multi-main-result}.}
    \label{fig:synthetic-experiments}
\end{figure*}

\subsection{NLP Tasks}
\label{sec:nlp-experiments}

\begin{figure*}[t]
    \begin{subfigure}{.24\textwidth}
        \centering
        \caption{CosmosQA}
        \includegraphics[width=1\linewidth]{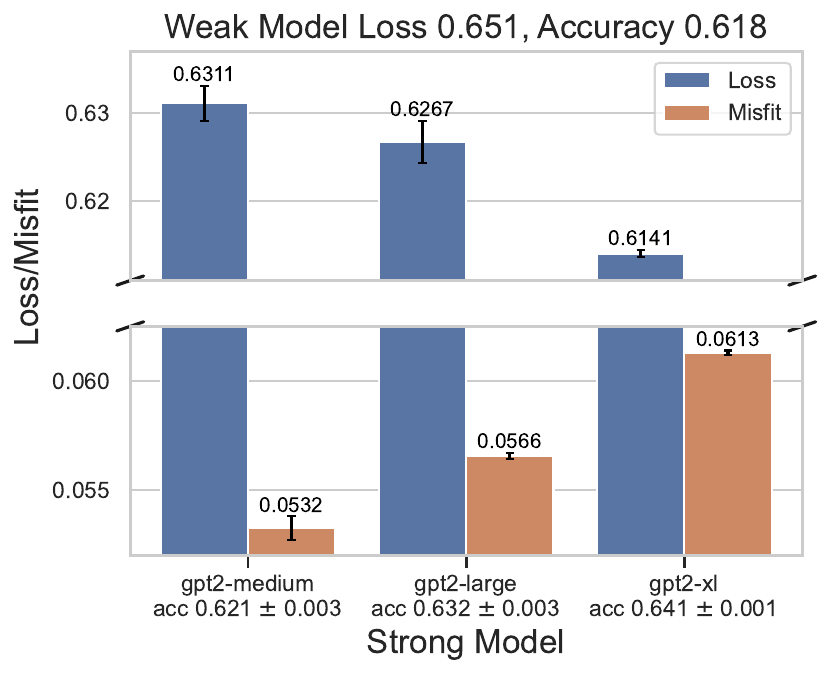}
        \label{fig:cosmos-qa-gpt2}
    \end{subfigure}
    \begin{subfigure}{.24\textwidth}
        \centering
        \caption{Amazon Polarity}
        \includegraphics[width=1\linewidth]{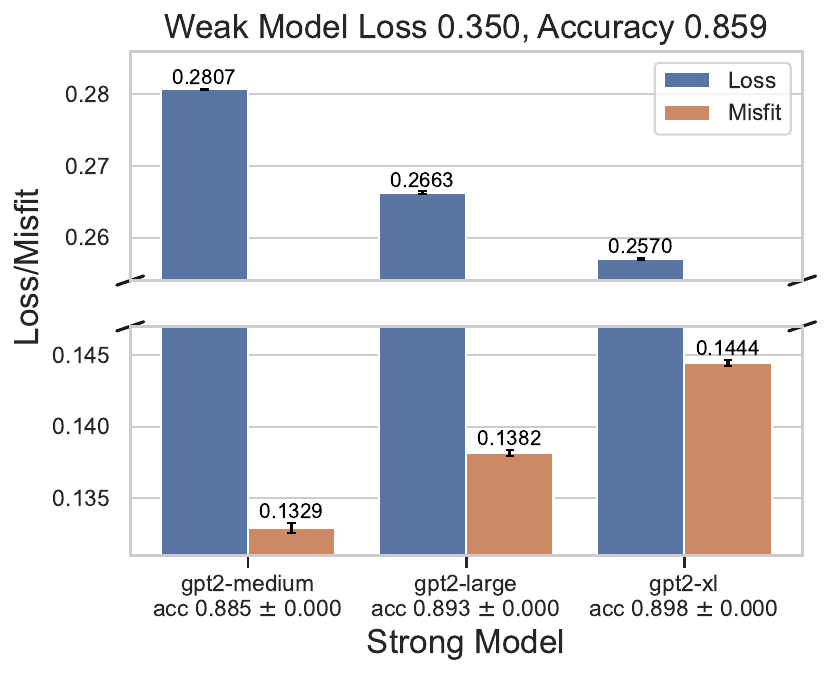}
        \label{fig:amazon-polarity-gpt2}
    \end{subfigure}
    \begin{subfigure}{.24\textwidth}
        \centering
        \caption{BoolQ}
        \includegraphics[width=1\linewidth]{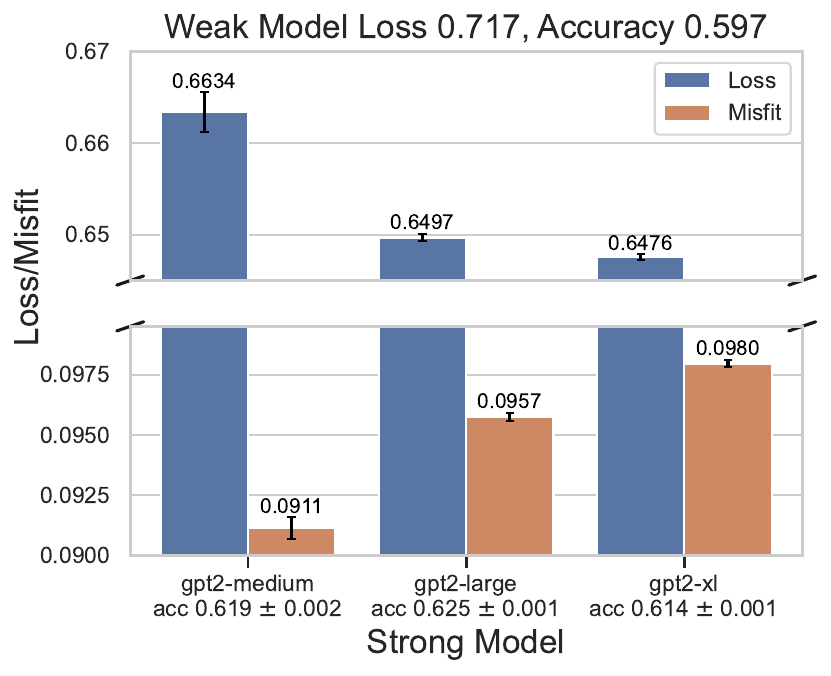}
        \label{fig:boolq-gpt2}
    \end{subfigure}
    \begin{subfigure}{.24\textwidth}
        \centering
        \caption{SciQ}
        \includegraphics[width=1\linewidth]{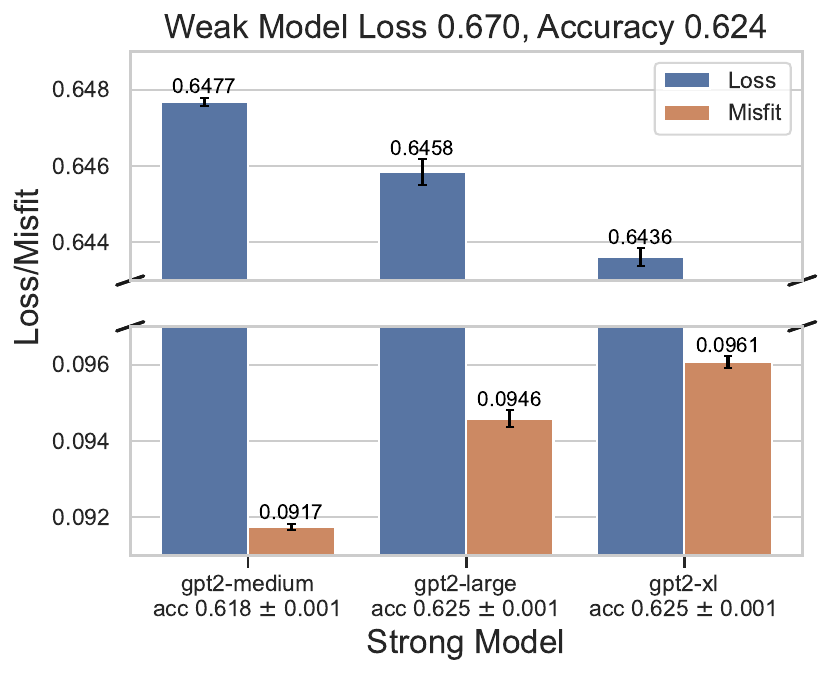}
        \label{fig:sciq-gpt2}
    \end{subfigure}
    \caption{Weak model (\texttt{gpt2}) is trained once on true labels and thereafter fixed. Each strong model is trained on the weak labels (on a held-out set separate from that which weak model was trained on) for 10 random initializations of the $k$-convex combination of logistic regression heads; we plot the average test loss/misfit across these 10 runs, along with the standard deviations as the error bars.}
    \label{fig:gpt2-weak-model}
\end{figure*}

Next, we consider four real-world NLP classification datasets\footnote{These are four of the five datasets considered in the codebase provided by OpenAI \citep{ecoffet2023} for their weak-to-strong generalization paper. We chose to not include the fifth (Anthropic/HH-RLHF \citep{bai2022training, ganguli2022red}) since the results of \citet{burns2023weaktostronggeneralizationelicitingstrong} on this dataset are extremely noisy (see the plot in \citet{ecoffet2023}), e.g., no model seems to be doing better than random guessing.}: BoolQ \citep{clark2019boolq}, SciQ \citep{welbl2017crowdsourcing}, CosmosQA \citep{huang2019cosmos} and Amazon Polarity \citep{mcauley2013hidden}. %
We work with models in the \texttt{gpt2} family \citep{gpt2}, where the weak model is fixed to be \texttt{gpt2}, and the strong model is chosen from \texttt{gpt2-medium}, \texttt{gpt2-large} and \texttt{gpt2-xl}. For each dataset, we first finetune the linear probe of the weak model (by minimizing cross-entropy) on 50\% of the training data with ground-truth labels. We then compute weak labels given by the trained model on the remaining 50\% of the training data. We then optimize a convex combination of $k=100$ logistic regression heads on top of the strong model to minimize the reverse objective as in \Cref{eqn:synthetic-wts-finetuning}. We add an $\ell_2$ regularization penalty (with coefficient 0.1) on the linear weights in the objective to help with regularization, and to better align with the requirements of \Cref{thm:main-result}. Finally, we estimate each of the terms in \Cref{eqn:synthetic-pythagorean-inequality} from the test data. The obtained results are shown in \Cref{fig:gpt2-weak-model}.

Firstly, we see weak-to-strong generalization (in terms of the loss) for all the datasets: the test loss of the strong model is always smaller than the test loss of its weak supervisor. Next, we see a clear trend on all four datasets: as we range the strong model from \texttt{gpt2-medium} to \texttt{gpt2-large} to \texttt{gpt2-xl}, the misfit onto the weak model increases, and concurrently, the loss on the test data decreases. In fact, for CosmosQA, Amazon Polarity and SciQ, in addition to the test loss decreasing, we observe that the test \textit{accuracy} is non-decreasing too (note again that our result does not claim anything about accuracy per se).

We remark that our accuracies are inferior to those reported in the experiments by \citet{burns2023weaktostronggeneralizationelicitingstrong} on these datasets. One reason for this is that we only ever tune the logistic regression heads of models (in both weak model training, as well as weak-to-strong training), whereas \citet{burns2023weaktostronggeneralizationelicitingstrong} allow full finetuning that includes the representation layer parameters. Additionally, our objective in the weak-to-strong training procedure is not the standard cross-entropy (which would have formed a convex objective w.r.t. the logistic regression parameters), but the reverse KL divergence objective given in \Cref{eqn:synthetic-wts-finetuning}. The latter is a non-convex objective, which can cause difficulty in minimizing the loss.

\subsection{Vision Tasks}
\label{sec:vision-experiments}

\begin{figure*}[t]
    \begin{subfigure}{.24\textwidth}
        \centering
        \includegraphics[width=1\linewidth]{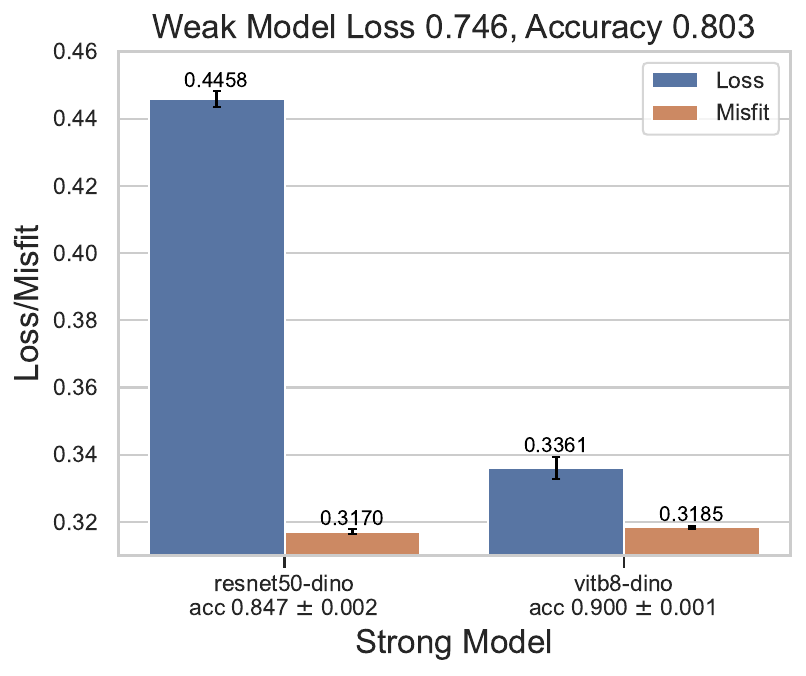}
        \caption{CIFAR-10}
        \label{fig:cifar-10_alexnet}
    \end{subfigure}
    \begin{subfigure}{.24\textwidth}
        \centering
        \includegraphics[width=1\linewidth]{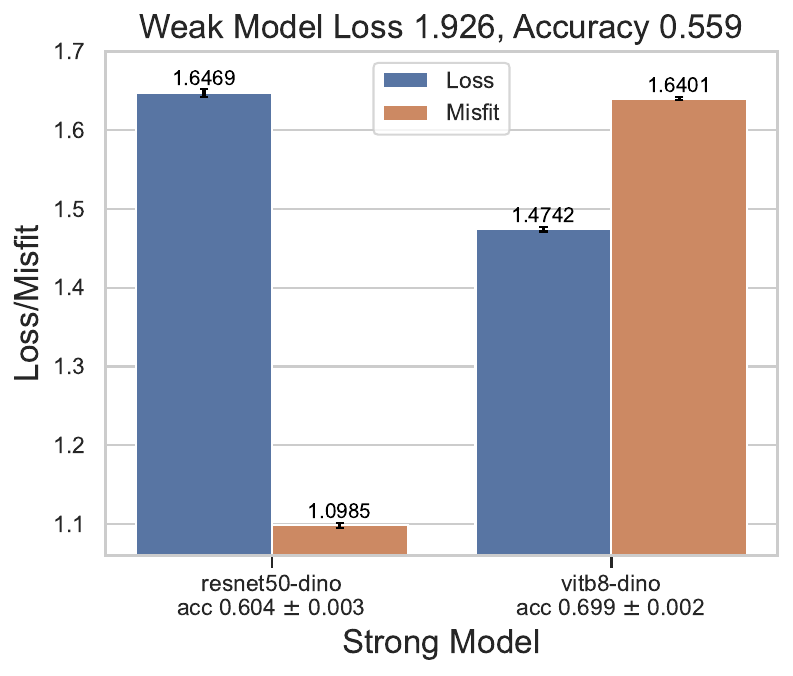}
        \caption{ImageNet}
        \label{fig:imagenet_alexnet}
    \end{subfigure}
    \begin{subfigure}{.24\textwidth}
        \centering
        \includegraphics[width=1\linewidth]{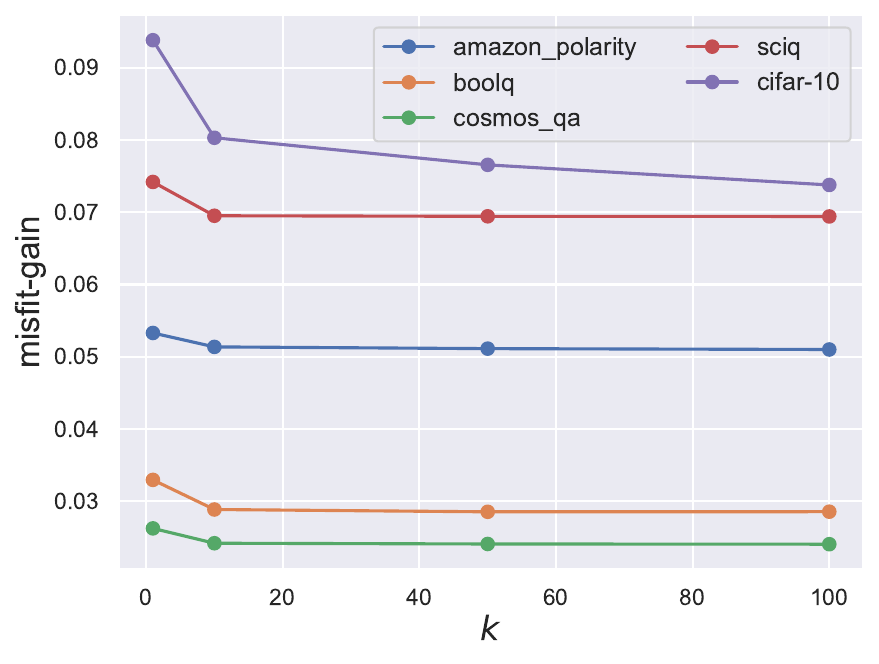}
        \caption{Varying $k$}
        \label{fig:varying-k}
    \end{subfigure}
    \begin{subfigure}{.24\textwidth}
        \centering
        \includegraphics[width=1\linewidth]{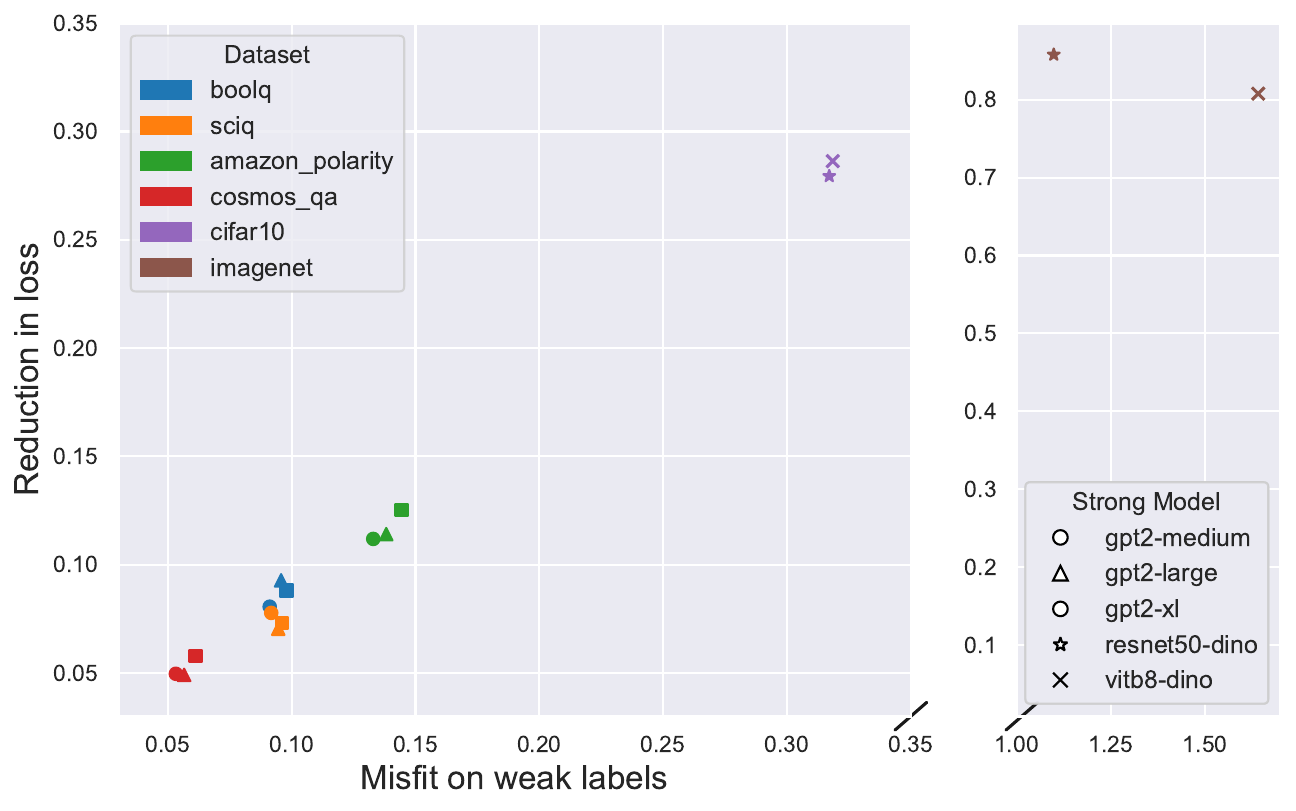}
        \caption{Consolidated misfit/gain plot across all datasets.}
        \label{fig:consolidated-realizable}
    \end{subfigure}
    \caption{(a), (b) Weak model (AlexNet) is trained once on true labels and thereafter fixed; we report numbers averaged over 10 runs of weak-to-strong training. (c) We observe that the difference between misfit and gain decreases as $k$ increases, and also that the decrease slows down with increasing $k$. (d) The test loss of the weak and strong model is measured not with respect to the ground truth test data, but instead with respect to the predictions of the best strong model on the test data. This is done to ensure realizability.}
    \label{fig:vision-consolidated}
\end{figure*}

We next perform experiments on image classification datasets. Following \citet{burns2023weaktostronggeneralizationelicitingstrong}, we fix the weak supervisor to be AlexNet \citep{krizhevsky2012imagenet}. For the strong model, we consider ResNet-50 \citep{he2016deep} and ViT-B/8 \citep{dosovitskiy2020image} based on DINO representations \citep{caron2021emerging}. Having fixed these representations, we finetune a convex combination of $k=100$ logistic heads on top of the strong model on the weakly labeled data. We consider two datasets: CIFAR-10 \citep{krizhevsky2009learning} and ImageNet \citep{ILSVRC15}, and obtain the same plots as we did for the NLP datasets in Figures \ref{fig:cifar-10_alexnet}, \ref{fig:imagenet_alexnet}. Again, we clearly observe that as the misfit of the strong model onto the weak labels increases, both, the test loss decreases as well as the test accuracy increases! Interestingly, we observe that our weak-to-strong accuracy on ImageNet for ViT-B/8 is \textit{better than} the corresponding weak-to-strong accuracy reported for the same experiment in Table 3, \citet{burns2023weaktostronggeneralizationelicitingstrong} (namely 64.2\% respectively). We note that the only difference in our setup is the weak-to-strong training of the linear probe: while \citet{burns2023weaktostronggeneralizationelicitingstrong} adopt standard cross-entropy minimization on a single logistic head, we employ the reverse KL minimization on a convex combination of $k$ heads, as suggested by our theory. Empirically benchmarking these two slightly different weak-to-strong training methods constitutes an interesting future direction.

\subsection{Varying $k$}
Recall that with a convex combination of $k$ logistic heads, the upper bound on the difference between misfit and gain in \Cref{thm:multi-main-result} scales as $O(\sqrt{c/k})$; in particular, as $k$ increases, the upper bound becomes smaller. As our next experiment, for each dataset, we fix the strong model to be the largest one (\texttt{gpt2-xl} for NLP, and ViT-B/8 for Vision), and vary $k$ in $\{1,10,50,100\}$. For each dataset, we plot the difference between misfit and gain against $k$; the plots are shown in \Cref{fig:varying-k-all-plots} in \Cref{sec:plots-varying-k}. We observe that for all datasets but ImageNet, the difference between misfit and gain consistently decreases as $k$ increases; furthermore, beyond a point, increasing $k$ does not decrease the discrepancy by much. This can also be seen in \Cref{fig:varying-k}, which collates the plots for all datasets (except ImageNet) in a single figure.\footnote{We chose to not include ImageNet in this plot as the plot for ImageNet was significantly off scale, and was skewing the y-axis.} We suspect that the trend does not show up for ImageNet because  $c=1000$, and hence $c$ dominates in the error term for the values of $k$ that we consider.

\subsection{Enforcing the Realizability Assumption}
Finally, while the plots in \Cref{fig:gpt2-weak-model} and \Cref{fig:cifar-10_alexnet}, \ref{fig:imagenet_alexnet} do illustrate that the gain in performance is directly proportional to the misfit as expected, our theory would also additionally suggest that the gain should \textit{quantitatively} be at least the misfit (up to an error term decreasing in $k$). This does not quite hold in the plots---we can observe that the misfit is consistently larger than the difference between the weak model and strong model loss.
One significant reason for this is that our result assumes realizability; namely, the target task should be \textit{exactly} representable by the strong model. %
We verified that this does not actually hold in our experiments---even if we train the strong models on data with ground-truth labels, we see a non-trivial test loss at the end of training. To isolate this cause of discrepancy,
we can consider evaluating the test losses of the weak and strong models (two terms on the LHS of \eqref{eqn:synthetic-pythagorean-inequality}) with respect to the best possible strong model (that is trained on true labels) instead of the ground-truth target function. This simply ensures that the realizability assumption holds. If we evaluate the quantities thus, consolidate the numbers across all the different NLP and vision datasets, and plot them with axes as in \Cref{fig:synthetic-experiments}, we obtain \Cref{fig:consolidated-realizable}. Note how this plot is more aligned to the plots in \Cref{fig:synthetic-experiments}, with the misfit more faithfully capturing the quantitative gain in performance. %

\section{Conclusion and Future Work} \label{sec:conclusion}

Both the theory in \Cref{sec:main-result} and experiments in \Cref{sec:experiments} raise several questions. Firstly, our empirical results seem significantly tighter than what \Cref{thm:main-result} suggests. Can the proof of \Cref{thm:main-result} be improved to tighten the error term? 
Secondly, can geometric training methods recover $g$ during weak-to-strong learning, given some assumptions on $\mathcal{F}$? For example, if $\mathcal{F}$ is affine, then the weak model could be forced to satisfy an orthogonality constraint so that its Bregman projection onto $\mathcal{F}$ is $g$. Finally, we note that \Cref{thm:main-result} applies to any regularized strong class $\mathcal{F}$ for classification. Our experiments focused on when $\mathcal{F}$ was made up of linear probes \cite{burns2023weaktostronggeneralizationelicitingstrong}, but the theory applies to significantly larger classes of models. %
How tightly does \Cref{thm:main-result} hold in these different settings, and can the ideas from this work be applied there to encourage better alignment?

\section*{Acknowledgements}
This work is supported by Moses Charikar and Gregory Valiant's Simons Investigator Awards.
This work is supported by NSF under awards 1705058 and 1934915.

\newpage

\bibliography{arxiv/references}
\bibliographystyle{plainnat}

\newpage
\appendix
\onecolumn
\section{Appendix} \label{sec:appendix}

\subsection{Common Bregman Divergences and their Generator Functions}
\label{sec:bregman-divergences-table}

\begin{table}[h]
    \centering
    \begin{tabular}{lllll}
        \multicolumn{1}{c}{\bf Name} & \multicolumn{1}{c}{\bf Generator} & \multicolumn{1}{c}{\bf Divergence}                   & \multicolumn{1}{c}{\bf Dual Map} & \multicolumn{1}{c}{\bf Legendre Dual} \\
        \hline                                                                                                                                                                                             \\
        L2 Loss                      & $\psi(x) = \frac{1}{2} x^2$       & $D_{\psi}(x,y) = \frac{1}{2} (x - y)^2$              & $x^* = x$                        & $\psi^*(x^*) = \frac{1}{2} (x^*)^2$   \\
        KL Divergence                & $\psi(p) = -H(p)$                 & $D_{\psi}(p,q) = \KL(p \| q$)                        & $p^* = \sigma^{-1}(p)$           & $\psi^*(p^*) = \log(1 + e^{p^*})$     \\
        Logistic                     & $\psi(x) = \log(1 + e^x)$         & $D_{\psi}(x,y) = \KL(\sigma(y) \| \sigma(x))$        & $x^* = \sigma(x)$                & $\psi^*(x^*) = -H(x^*)$               \\
        Itakura-Saito                & $\psi(x) = - \log x$              & $D_{\psi}(x,y) = \frac{x}{y} - \log \frac{x}{y} - 1$ & $x^* = 1/x$                      & $\psi^*(x^*) = 1  - \log x^*$
    \end{tabular}
    \caption[Caption]{Examples of different univariate generator functions and their Bregman divergences.} \label{table:bg-ex}
\end{table}

\subsection{Generalizing the Misfit Inequality to Multiple Classes} \label{sec:multiple_classes}

Here we expand on the modifications needed to generalize the results discussed in \Cref{sec:main-result} to the multi-class setting. There is both a \textit{coordinate-dependent} and \textit{coordinate-independent} approach to generalizing the misfit inequality. Below we detail the coordinate-dependent approach as it is easier to understand. Later we discuss the coordinate-independent approach which is more in line with the approach taken by softmax regression.

Again we make the simplifying assumption that our input data $X \sim P_X$ has finite support $\mathcal{X}$.

\subsubsection{Multi-Class KL Divergence as a Bregman Divergence} \label{sec:coordinate-dependent}
In the binary-classification setting, we are learning models that output a probability distribution on $\set{0,1}$ given the input $X$. While such a distribution is written with two probability values $p$ and $1-p$ in $[0,1]$, it suffices to only know one of the two probability values to determine the other. Thus we can represent our space of models on data $X$ by the (finite-dimensional) function space $[0,1]^{\mathcal{X}}$. This motivated framing KL-Divergence as a Bregman divergence generated by the \textit{univariate} convex function $\psi(p) := p \log p + (1-p) \log (1-p)$. In binary-classification, we tend to interpret the output of a model $f \in [0,1]^{\mathcal{X}}$ as the probability of the positive class; however, this choice was largely arbitrary. This arbitrariness is the \textit{coordinate-dependence} of this formulation. However, had we chosen the output of $f: \mathcal{X} \to [0,1]$ to be the probability of the negative class, we would have obtained the same Bregman divergence by the symmetry of $\psi$. The rigidity of the Bregman divergence to this arbitrariness is the \textit{coordinate-independence} of the underlying
theory.

In the $c$-class setting for $c > 2$, positive probability distributions are represented by $c$-dimensional vectors $p \in \Delta^{c-1}$, where $\Delta^{c-1}$ is the (open) $c-1$-dimensional probability simplex:
$$\Delta^{c-1} := \set{p \in (0,1)^c : \sum_{i=1}^c p_i = 1}.$$
Now, for $p \in \Delta^{c-1}$, knowing $c-1$ of the $c$ probabilities is sufficient to determine the $c^{th}$ probability. Making the arbitrary choice of dropping the redundant $c$-th coordinate, we get the set $$\btriangle^{c-1} := \set{p \in (0,1)^{c-1} : \sum_{i=1}^{c-1} p_i < 1},$$
parameterizing positive probability distributions on $[c]$. Thus, we can represent our space of models on data $X$ by the (finite-dimensional) function space $\set{f : \mathcal{X} \to \btriangle^{c-1} }$.

Using this parameterization, we can express the negative Shannon entropy as
$$\psi(p) := p_1 \log p_1 + \cdots p_{c-1} \log p_{c-1} + \left( 1- \sum_{i=1}^{c-1} p_i \right) \log \left( 1 - \sum_{i=1}^{c-1} p_i \right).$$
Its dual map is the logit function $\sigma^{-1}: \btriangle^{c-1} \to \R^{c-1}$ with respect to the $c$-th probability
$$p^* := \sigma^{-1}(p) = \left( \log\frac{p_i}{p_c} \right)_{i=1}^{c-1},$$
where we abuse notation slightly for consistency with the binary class case. Then, we can compute the Legendre dual of $\psi$ as
$$\psi^*(p^*) = \log \left( \sum_{i=1}^{c} e^{p_i^*} \right),$$
where $p_c^* := 0$. $\psi^*$ is the \textit{log-sum-exp} function, and its gradient is the softmax function $\sigma$. So similar to the binary case, softmax regression learns models in the dual (logit) space $\mathcal{X} \to \R^{c-1}$, and applies $\sigma$ to convert the outputs to primal (probability) values.

Since $\psi$ is strictly convex and $C^1$ on $\btriangle^{c-1}$, it determines a Bregman divergence $D_{\psi}(p,q) := \psi(p) - \psi(q) - \langle p- q, \nabla \psi(q) \rangle$ for $p,q \in \btriangle^{c-1}$. Substituting our choice of $\psi$, we get that $D_{\psi}(p,q)$ is the multi-class KL divergence:
$$D_{\psi}(p,q) = \sum_{i=1}^{c} p_i \log \frac{p_i}{q_i} = \KL(p \| q),$$
taking $p_c := 1 - \sum_{i=1}^{c-1} p_i$ and $q_c := 1 - \sum_{i=1}^{c-1} q_i$. Using this Bregman divergence, we can state the multi-class misfit analogs of the results \Cref{thm:main-result-conv} and \Cref{thm:main-result} in \Cref{sec:main-result}. We first include a proof of \Cref{thm:main-result-conv-bg} in more detail than in the main body of the paper.

\begin{reptheorem}{thm:main-result-conv-bg}
    Let $\psi: \sR^n \to \eRp$ be a proper convex function s.t. $U_\psi \neq \emptyset$. Let $h_s: \mathcal{X} \to \sR^{d_s}$ and $h_w: \mathcal{X} \to \sR^{d_w}$ be the strong and weak learner representations respectively. Let $f_w: \sR^{d_w} \to U_\psi$ be the weak model finetune layer, and $g: \mathcal{X} \to U_\psi$ be the target function. Let $\mathcal{F}$ be a class of functions mapping $\sR^{d_s} \to U_\psi$. If the following hold:
    \begin{enumerate}
        \item (Realizability) $\exists f_* \in \mathcal{F}$ s.t. $g = f_* \circ h_s$,
        \item (Convexity) $\mathcal{F}$ is a convex set of functions,
        \item (Sequential Consistency) For $y \in U_\psi$ fixed, if $D_{\psi}(x_n, y) \to 0$, then $x_n \to y$.
    \end{enumerate}
    Then for any $\epsilon > 0$, there exists $\delta > 0$ such that for all $f_s \in \mathcal{F}$ that satisfy
    \begin{align*}
        \E_X \big[ D_{\psi} \left( f_s(h_s(X)) , f_w( h_w(X)) \right) \big] \leq \inf_{f \in \mathcal{F}} \E_X \left[ D_{\psi}\left(f( h_s(X) ) , f_w ( h_w(X) ) \right) \right] + \delta,
    \end{align*}
    we have
    \begin{align} \label{eq:misfit-ineq-bg-appendix}
        \E_X \big[D_{\psi} & \left(g(X) , f_s(h_s(X)) \right) \big]  \leq \E_X \left[D_{\psi}\left(g(X) , f_w(h_w(X))\right)\right] - \E_X \left[D_{\psi}\left(f_s(h_s(X)) , f_w(h_w(X))\right)\right] + \epsilon.
    \end{align}
\end{reptheorem}
\begin{proof}
    First note that convexity of $\mathcal{F}$ implies convexity of $\mathcal{F} \circ h_s := \set{f \circ h_s : f \in \mathcal{F}}$: for any $f_1, f_2 \in \mathcal{F}$,  $\lambda \in [0,1]$, and $x \in \mathcal{X}$, we have
    $$\lambda f_1(h_s(x)) + (1-\lambda) f_2(h_s(x)) = (\lambda f_1 + (1-\lambda) f_2)(h_s(x)) \in \mathcal{F} \circ h_s.$$
    Since $g \in \overline{\mathcal{F} \circ h_s}$, by \Cref{thm:bg_pyth_main}, we have that $g_{\text{proj}} := P_{\overline{\mathcal{F} \circ h_s}}(f_w \circ h_w)$ uniquely exists and
    $$
        \E_X \left[ D_{\psi}(g(X) , g_{\text{proj}}(X)) \right] \leq \E_X \left[ D_{\psi}(g(X) , f_w(h_w(X))) \right] - \E_X \left[ D_{\psi}( g_{\text{proj}}(X) , f_w(h_w(X)))  \right].
    $$

    Now we approximate $g_{\text{proj}}$ sufficiently well using $\mathcal{F} \circ h_s$. By continuity of both $\E_X \left[ D_{\psi}(\cdot , f_w(h_w(X))) \right]$ and $\E_X \left[ D_{\psi}(g(X) , \cdot) \right]$, there exists $\epsilon_1 > 0$ s.t. if $\E_X \left[ \| g(X) - f_s(h_s(X)) \|_2^2 \right] \leq \epsilon_1$, then
    $$
        \E_X \left[ D_{\psi}(g(X) , f_s(h_s(X))) \right] \leq \E_X \left[ D_{\psi}(g(X) , f_w(h_w(X))) \right] - \E_X \left[ D_{\psi}( f_s(h_s(X)) , f_w(h_w(X)))  \right] + \epsilon .
    $$
    By sequential consistency, there exists $\delta > 0$ s.t. if $x, y \in U_\psi$ are s.t. $D_\psi(x, y) \leq \delta$, then $\|x - y\|_2^2 \leq \epsilon_1$. Let $p_{\min} := \min_{x \in \mathcal{X}} P_X(x) > 0$. Let $f_s \in \mathcal{F}$ be s.t.
    $$\E_X \left[ D_{\psi}(f_s(h_s(X)) , f_w(h_w(X))) \right] \leq \inf_{f \in \mathcal{F}} \E_X \left[ D_{\psi}(f(h_s(X)) , f_w(h_w(X))) \right] + p_{\min} \delta.$$
    By definition of $g_{\text{proj}}$, we have that
    \begin{align*}
        \E_X \left[ D_{\psi}(g_{\text{proj}}(X) , f_w(h_w(X))) \right] & = \min_{g_2 \in \overline{\mathcal{F} \circ h_s}} \E_X \left[ D_{\psi}(g_2(X) , f_w(h_w(X))) \right] \\
                                                                       & = \inf_{f \in \mathcal{F}} \E_X \left[ D_{\psi}(f(h_s(X)) , f_w(h_w(X))) \right],
    \end{align*}
    where the last equality follows by continuity of  $\E_X \left[ D_{\psi}(\cdot , f_w(h_w(X))) \right]$. Then, applying \Cref{thm:bg_pyth_main} again, this time to $f_s \circ h_s \in \overline{\mathcal{F} \circ h_s}$ instead of $g$, we have that
    $$
        \E_X \left[ D_{\psi}(f_s(h_s(X)) , g_{\text{proj}}(X)) \right] \leq \E_X \left[ D_{\psi}(f_s(h_s(X)) , f_w(h_w(X))) \right] - \E_X \left[ D_{\psi}( g_{\text{proj}}(X) , f_w(h_w(X)))  \right] \leq p_{\min}\delta.
    $$
    Then, for all $x \in \mathcal{X}$, $D_{\psi}(f_s(h_s(x)) , g_{\text{proj}}(x)) \leq \delta$. Thus,
    \begin{align*}
        \epsilon_1 & \geq \sup_{x \in \mathcal{X}} \left[ \|f_s(h_s(x) )- g_{\text{proj}}(x)\|_2^2 \right] \\
                   & \geq \E_X \left[ \|f_s(h_s(X) )- g_{\text{proj}}(X)\|_2^2 \right].
    \end{align*}
    Thus,
    $$
        \E_X \left[ D_{\psi}(g(X) , f_s(h_s(X))) \right] \leq \E_X \left[ D_{\psi}(g(X) , f_w(h_w(X))) \right] - \E_X \left[ D_{\psi}( f_s(h_s(X)) , f_w(h_w(X)))  \right] + \epsilon .
    $$
\end{proof}

Like in the binary case, the multi-class misfit-gain inequality follows as a corollary to \Cref{thm:main-result-conv-bg}.

\begin{corollary}[Multi-Class Misfit-Gain Inequality] \label{thm:multi-main-result-conv}
    Suppose $h_s: \mathcal{X} \to \sR^{d_s}$ is the strong learner hidden representation and $h_w: \mathcal{X} \to \sR^{d_w}$ is the weak learner representation. Let $f_w: \sR^{d_w} \to \btriangle^{c-1}$ be the classifier for the weak model, and let $g: \mathcal{X} \to \Delta^{c-1}$ be the target function. Let $\mathcal{F}$ be a class of functions mapping $\sR^{d_s} \to \btriangle^{c-1}$. If the following hold:
    \begin{enumerate}
        \item (Realizability) $\exists f_* \in \mathcal{F}$ so that $g = f_* \circ h_s$,
        \item (Convexity) $\mathcal{F}$ is a convex set of functions,
    \end{enumerate}
    then for any $\epsilon > 0$, there exists $\delta > 0$ so that for all $f_s \in \mathcal{F}$ such that
    $$\E_X \left[ \KL(f_s(h_s(X)) \| f_w(h_w(X)) ) \right] \leq \inf\limits_{f \in \mathcal{F}} \E_X \left[ \KL(f(h_s(X)) \| f_w(h_w(X))) \right] + \delta,$$
    we have
    \begin{equation} \label{eq:multi-misfit-ineq}
        \E_X \left[ \xe(g(X) \| f_s(h_s(X)) ) \right] \leq \E_X \left[ \xe(g(X) \| f_w(h_w(X))) \right] - \E_X \left[ \KL(f_s(h_s(X)) \| f_w(h_w(X)))  \right] + \epsilon.
    \end{equation}
\end{corollary}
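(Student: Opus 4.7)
The plan is to obtain this corollary as a direct application of Theorem \ref{thm:main-result-conv-bg} with $\psi$ chosen to be the multi-class negative Shannon entropy described in \Cref{sec:coordinate-dependent}, exactly mirroring how Corollary \ref{thm:main-result-conv} was deduced from Theorem \ref{thm:main-result-conv-bg} in the binary case. Concretely, take $\psi: \R^{c-1} \to \eRp$ given by
\[
    \psi(p) = \sum_{i=1}^{c-1} p_i \log p_i + \Bigl(1 - \sum_{i=1}^{c-1} p_i\Bigr)\log\Bigl(1 - \sum_{i=1}^{c-1} p_i\Bigr),
\]
with the convention that $\psi = \infty$ off the closed simplex. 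Then $U_\psi = \btriangle^{c-1}$, and by the computation in \Cref{sec:coordinate-dependent} we have $D_\psi(p,q) = \KL(p \| q)$, so $\mathcal{F}$, $f_w$, and $g$ all take values in $U_\psi$.

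Next, I would reduce the desired inequality \eqref{eq:multi-misfit-ineq} to the Bregman form of Theorem \ref{thm:main-result-conv-bg}. Using $\xe(g(X) \| f(X)) = \KL(g(X) \| f(X)) + H(g(X))$ and subtracting the task-dependent constant $\E_X[H(g(X))]$ from both sides of \eqref{eq:multi-misfit-ineq}, the target becomes
\[
    \E_X\bigl[\KL(g(X)\|f_s(h_s(X)))\bigr] \le \E_X\bigl[\KL(g(X)\|f_w(h_w(X)))\bigr] - \E_X\bigl[\KL(f_s(h_s(X))\|f_w(h_w(X)))\bigr] + \epsilon,
\]
which is precisely \eqref{eq:misfit-ineq-bg-appendix} for this $\psi$. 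The realizability and convexity hypotheses of Theorem \ref{thm:main-result-conv-bg} are inherited verbatim from the corollary's hypotheses.

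The only nontrivial condition to verify is sequential consistency: if $\KL(p_n \| q) \to 0$ for a fixed $q \in \btriangle^{c-1}$, then $p_n \to q$. I would discharge this using Pinsker's inequality, which gives $\KL(p_n \| q) \ge \tfrac{1}{2}\|p_n - q\|_1^2$ (interpreting $p_n, q$ as full probability vectors on $[c]$ via the redundant $c$-th coordinate), so $\|p_n - q\|_1 \to 0$, and hence convergence in the Euclidean norm on $\R^{c-1}$ follows by equivalence of norms. With sequential consistency in hand, Theorem \ref{thm:main-result-conv-bg} applies and yields the corollary.

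I do not anticipate a genuine obstacle here, since this is a template application; the one bookkeeping point worth being careful about is that $g$ takes values in $\Delta^{c-1}$ (equivalently $\btriangle^{c-1}$ under the chosen parameterization) rather than on the boundary, which is ensured by the realizability assumption combined with $\mathcal{F}$ mapping into $\btriangle^{c-1}$. The proof itself is thus essentially a one-paragraph reduction once the multi-class Bregman framework of \Cref{sec:coordinate-dependent} is in place.
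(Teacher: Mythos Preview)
Your proposal is correct and follows essentially the same approach as the paper: reduce the cross-entropy inequality to the KL inequality by subtracting $\E_X[H(g(X))]$ from both sides, then apply Theorem~\ref{thm:main-result-conv-bg} with $\psi$ the multi-class negative Shannon entropy, verifying sequential consistency via Pinsker's inequality. The paper's proof is terser but identical in substance.
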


\begin{proof}
    First note that by subtracting $\E_X \left[ H(g(X)) \right]$ from both sides of \eqref{eq:multi-misfit-ineq}, we get
    $$
        \E_X \left[ \KL(g(X) \| f_s(h_s(X)) ) \right] \leq \E_X \left[ \KL(g(X) \| f_w(h_w(X))) \right] - \E_X \left[ \KL(f_s(h_s(X)) \| f_w(h_w(X)))  \right] + \epsilon,
    $$
    so it suffices to prove this latter inequality. But this inequality follows from \Cref{thm:main-result-conv-bg} after noting $\KL$ is sequentially consistent by Pinsker's inequality.
\end{proof}

\subsubsection{Proof of \Cref{thm:main-result} for Multiple Classes}
\label{sec:multi-class-proof}

Next we prove \Cref{thm:main-result} for the multi-class setting. From this result we will recover the \Cref{thm:main-result} for binary classification. Recall that for a set $S$ in a vector space $V$, $\co^k S$ is defined as
$$\co^k S := \set{y \in V : \exists x_1, \ldots, x_k \in S, p \in \Delta^{k-1} \text{ s.t. } y = \sum_{i=1}^k p_i x_i}.$$

\begin{theorem} \label{thm:multi-main-result}
    Let $h_s, h_w, f_w, g$ be as in \Cref{thm:multi-main-result-conv}. Let $\mathcal{F}$ be a class of functions mapping $\R^{d_s} \to \btriangle^{c-1}$. If the following hold:
    \begin{enumerate}
        \item (Realizability) $\exists f_* \in \mathcal{F}$ so that $g = f_* \circ h_s$,
        \item (Regularization) $\mathcal{F}$ is s.t. $\max\limits_{i \in [c]}  \E_X \left[\sup_{f \in \mathcal{F}} 1/f_i(X) \right] < \infty$.
    \end{enumerate}
    Then for any $k \in \sN$, there exists $\delta > 0$ so for all $f_s \in \co^k \mathcal{F}$ such that
    $$\E_X \left[ \KL(f_s(h_s(X)) \| f_w(h_w(X))) \right] \leq \inf_{f \in \co^k \mathcal{F}} \E_X \left[ \KL(f(h_s(X)) \| f_w(h_w(X))) \right] + \delta,$$
    we have
    \begin{equation} \label{eq:multi-misfit-ineq-gen}
        \E_X \left[ \xe(g(X) \| f_s(h_s(X))) \right] \leq \E_X \left[ \xe(g(X) \| f_w(h_w(X))) \right] - \E_X \left[ \KL(f_s(h_s(X)) \| f_w(h_w(X))) \right] + O(\sqrt{\frac{c}{k}}).
    \end{equation}
\end{theorem}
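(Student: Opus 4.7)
The plan is to reduce \Cref{thm:multi-main-result} to \Cref{thm:multi-main-result-conv} (applied to the convex hull $\co \mathcal{F}$) via a Maurey--Zeevi probabilistic approximation that bounds the gap between $\co^k \mathcal{F}$ and $\co \mathcal{F}$ in expected KL-misfit. First, subtract $\E_X[H(g(X))]$ from both sides of \eqref{eq:multi-misfit-ineq-gen} to replace $\xe$ by $\KL$ in the gain term, as in the proof of \Cref{thm:multi-main-result-conv}. Then apply \Cref{thm:bg_pyth_main} to $\mathcal{W} := \overline{\co \mathcal{F} \circ h_s}$ with $z = f_w \circ h_w$: realizability gives $g \in \mathcal{W}$, so the projection $g_{\text{proj}} := P_{\mathcal{W}}(f_w \circ h_w)$ satisfies the Pythagorean inequality, with $\E_X[\KL(g_{\text{proj}}(X) \| f_w(h_w(X)))] = \inf_{f \in \co \mathcal{F}} \E_X[\KL(f(h_s(X)) \| f_w(h_w(X)))]$ by continuity.

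The core step is the Zeevi-style approximation showing
\[
\inf_{f \in \co^k \mathcal{F}} \E_X[\KL(f(h_s(X)) \| f_w(h_w(X)))] \leq \inf_{f \in \co \mathcal{F}} \E_X[\KL(f(h_s(X)) \| f_w(h_w(X)))] + O(\sqrt{c/k}).
\]
Given a near-minimizer $f^\star = \sum_j \lambda_j f^{(j)} \in \co \mathcal{F}$, draw $I_1,\ldots,I_k \stackrel{\textnormal{iid}}{\sim} \lambda$ and set $\hat{f} := \frac{1}{k}\sum_m f^{(I_m)} \in \co^k \mathcal{F}$. Because $\E_I[\hat{f}(x)] = f^\star(x)$ and the cross term $-\sum_i \hat{f}_i \log f_{w,i}(h_w(x))$ is linear in $\hat{f}$, only the entropy $\sum_i \hat{f}_i \log \hat{f}_i$ contributes to $\E_I[\KL(\hat{f}(x) \| f_w(h_w(x))) - \KL(f^\star(x) \| f_w(h_w(x)))]$. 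A second-order Taylor expansion of $q \mapsto q\log q$ together with the regularization assumption (which gives $\hat{f}_i(x), f^\star_i(x) \geq 1/M_i(x)$ for $M_i(x) := \sup_{f \in \mathcal{F}} 1/f_i(x)$ with $\E_X[M_i(X)] < \infty$) bounds each coordinate contribution by $O(M_i(x)/k)$; summing over coordinates and integrating yields an $O(c/k)$ bound, which we absorb into the claimed $O(\sqrt{c/k})$ rate. The probabilistic method then guarantees an $\hat{f} \in \co^k \mathcal{F}$ realizing the claimed infimum gap.

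With this in hand, follow the skeleton of the proof of \Cref{thm:main-result-conv-bg}: any $f_s \in \co^k \mathcal{F}$ that is $\delta$-suboptimal over $\co^k \mathcal{F}$ is $(\delta + O(\sqrt{c/k}))$-suboptimal over $\co \mathcal{F}$, so applying \Cref{thm:bg_pyth_main} with $x = f_s \circ h_s \in \mathcal{W}$ yields $\E_X[\KL(f_s(h_s(X)) \| g_{\text{proj}}(X))] \leq \delta + O(\sqrt{c/k})$. Expanding $\E_X[\KL(g(X) \| f_s(h_s(X)))]$ via the Law of Cosines (\Cref{thm:cosines}) for the triple $(g(X), f_s(h_s(X)), f_w(h_w(X)))$ produces the desired misfit/gain identity up to an inner-product remainder; decomposing through $g_{\text{proj}}$ and using the first-order optimality of $g_{\text{proj}}$ together with Cauchy--Schwarz bounds this remainder by a product of primal and dual distances between $f_s \circ h_s$ and $g_{\text{proj}}$---Pinsker's inequality handles the primal side, and the Lipschitz property of $\sigma^{-1}$ on the sublevel set $\{p : p_i \geq 1/M_i(x)\}$ handles the dual side, both via the regularization assumption. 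The main obstacle is precisely this quantitative transfer: $\KL$ has unbounded gradient near the simplex boundary, so converting primal KL-closeness into dual logit-closeness requires careful use of the regularization hypothesis, which also keeps the Taylor remainder $(q\log q)'' = 1/q$ integrable in the Maurey step.
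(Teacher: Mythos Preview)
Your proposal is correct and follows the paper's strategy: Maurey/Zeevi probabilistic approximation via a Jensen-gap bound on the negative entropy (the paper's \Cref{thm:bregman-approximation} and \Cref{thm:jensen-gap-entropy}), Pythagorean projection onto $\cco(\mathcal{F}\circ h_s)$, and a transfer from $g_{\text{proj}}$ to $f_s$ using Pinsker plus Lipschitz control of $\log$ away from the simplex boundary (the paper's \Cref{thm:kl-loginf-bd}). The only packaging difference is in the transfer step: the paper bounds $\big|\E_X[\KL(g\|g_{\text{proj}})]-\E_X[\KL(g\|f_s)]\big|$ directly by $\E_X\big[\max_i|\log g_{\text{proj},i}-\log f_{s,i}|\big]$, which avoids $f_w$ entirely and yields a constant depending only on $\max_i\rho_i$, whereas your Law-of-Cosines route for the triple $(g,f_s,f_w)$ produces cross terms that, after routing through $g_{\text{proj}}$ and first-order optimality, carry a bounded $\|f_w^*-g_{\text{proj}}^*\|$ factor alongside the small primal/dual $f_s$--$g_{\text{proj}}$ distances---still $O(\sqrt{c/k})$, but note that the remainder is a \emph{sum} of such terms (each with one small and one bounded factor) rather than a single primal-dual product as your sketch suggests.
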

The proof will come down to determining how large we need to make $k$ to get satisfactory bounds on both
\begin{gather*}
    \E_X \left[ \KL(g_{\text{proj}}(X) \| f_w(h_w(X))) \right] - \E_X \left[ \KL(f_s(h_s(X)) \| f_w(h_w(X))) \right], \\
    \E_X \left[ \KL(g(X) \| g_{\text{proj}}(X)) \right] - \E_X \left[ \KL(g(X) \| f_s(h_s(X))) \right],
\end{gather*}
where $g_{\text{proj}} := P_{\cco (\mathcal{F} \circ h_s)}(f_w \circ h_w)$. We will see the first of the two terms is controlled by the \textit{Jensen approximation gap} of $\psi$ for a Bregman divergence $D_{\psi}$. The second term is more difficult to bound from general Bregman theory. We will instead bound it in specifically for KL divergence, leaving it open what conditions are necessary to get a satisfactory bound for general Bregman divergences.

For a random variable $Z \sim P_Z$, we let $\overline{Z}^{(n)}$ denote the empirical mean of $n$ i.i.d. samples of $Z$.

\begin{definition}
    Let $\varphi: \R^d \to \eRp$ be a proper, convex function. The \textit{Jensen approximation gap} of $\varphi$ is

    $$ \mathrm{Gap}(n; \varphi) := \sup\limits_{\substack{Z \sim P_Z\text{ on }\dom \psi \\ \E Z \in \dom \psi}} \E \left[ \varphi(\overline{Z}^{(n)}) \right] - \varphi(\E \left[ Z \right]).$$
\end{definition}

\begin{lemma} \label{thm:bregman-approximation}
    Let $\psi: \R^d \to \eRp$ be the generator for Bregman divergence $D_{\psi}$. Let $S \subset U_{\psi}$ and let $w \in U_{\psi}$. Then for any $k \in \sN$ and $y \in \co S$, we have that there exists $x \in \co^k S$ s.t.
    $$
        D_{\psi}(x, w) \leq D_{\psi}(y, w) + \mathrm{Gap}(k; \psi).
    $$
\end{lemma}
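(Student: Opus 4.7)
The plan is to use the probabilistic method: express $y$ as a convex combination of points in $S$ (which induces a distribution on $S$), sample $k$ i.i.d.\ points from this distribution, and show that their empirical mean satisfies the desired bound in expectation. Since any almost sure event certainly occurs with positive probability, some realization of these $k$ samples yields the claimed $x \in \co^k S$.

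Concretely, since $y \in \co S$, write $y = \sum_{j=1}^m p_j s_j$ for some $s_j \in S$ and $p \in \Delta^{m-1}$. Let $Z$ be the random variable supported on $\{s_1, \ldots, s_m\}$ with $\Pr(Z = s_j) = p_j$, so that $\E[Z] = y$ and $Z$ is almost surely in $S \subset U_\psi$. Let $Z_1, \ldots, Z_k$ be i.i.d.\ copies of $Z$ and set $\overline{Z}^{(k)} := \frac{1}{k}\sum_{i=1}^k Z_i$. Then $\overline{Z}^{(k)} \in \co^k S$ almost surely, and since $U_\psi$ is convex and contains $S$, we have $\overline{Z}^{(k)} \in U_\psi$ almost surely (so the Bregman divergence is well-defined).

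Now expand the expected divergence using the definition $D_\psi(x,w) = \psi(x) - \psi(w) - \langle x - w, w^* \rangle$ and linearity of expectation:
\begin{align*}
\E\!\left[ D_\psi\!\left( \overline{Z}^{(k)}, w \right) \right]
&= \E\!\left[ \psi\!\left( \overline{Z}^{(k)} \right) \right] - \psi(w) - \left\langle \E\!\left[ \overline{Z}^{(k)} \right] - w,\, w^* \right\rangle \\
&= \E\!\left[ \psi\!\left( \overline{Z}^{(k)} \right) \right] - \psi(y) \;+\; \bigl( \psi(y) - \psi(w) - \langle y - w, w^* \rangle \bigr) \\
&= \E\!\left[ \psi\!\left( \overline{Z}^{(k)} \right) \right] - \psi(y) + D_\psi(y, w).
\end{align*}
By the definition of the Jensen approximation gap applied to the distribution of $Z$ (which satisfies $Z \in \dom \psi$ a.s.\ and $\E[Z] = y \in U_\psi \subseteq \dom \psi$), the first difference is at most $\mathrm{Gap}(k; \psi)$. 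Hence $\E[D_\psi(\overline{Z}^{(k)}, w)] \leq D_\psi(y, w) + \mathrm{Gap}(k; \psi)$, so there exists a realization $x := \overline{Z}^{(k)}(\omega) \in \co^k S$ achieving at most the expectation.

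There is no real obstacle here; the argument is a clean application of the probabilistic method and the definition of the Jensen gap. The only mild subtlety is ensuring that the dual linear term $\langle \cdot, w^* \rangle$ commutes with expectation (immediate by linearity on a finite support) and that $\overline{Z}^{(k)}$ lies in $U_\psi$ so that $D_\psi(\overline{Z}^{(k)}, w)$ is defined; both are automatic from convexity of $U_\psi$ and $S \subset U_\psi$. The real work of the subsequent main theorem will be bounding $\mathrm{Gap}(k; \psi)$ for the KL generator and separately controlling the term $\E[\KL(g \| g_{\text{proj}})] - \E[\KL(g \| f_s \circ h_s)]$, neither of which this lemma addresses.
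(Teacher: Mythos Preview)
Your proof is correct and essentially identical to the paper's: both represent $y$ as a finite convex combination to induce a distribution on $S$, take i.i.d.\ samples, expand $\E[D_\psi(\overline{Z}^{(k)}, w)]$ so that the linear terms cancel down to $\E[\psi(\overline{Z}^{(k)})] - \psi(y) + D_\psi(y,w)$, bound the first difference by $\mathrm{Gap}(k;\psi)$, and conclude via the probabilistic method. Your additional remarks on well-definedness and on what the lemma leaves for later are accurate and match the paper's subsequent structure.
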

Note that in the finite-dimensional case Caratheodory's theorem does tell us that $\co^{d+1} S = \co S$. However, we should imagine $d \gg k$. In the infinite-dimensional case, Caratheodory's theorem does not hold, so we need to use the approach from the lemma above. The proof of the lemma applies probabilistic method in a way similar to \cite{ZEEVI199799}.
\begin{proof}
    Let $y \in \co S$. Then there exist $n \in \sN$, $z_1, \ldots, z_n \in S$, and $(p_1, \ldots, p_n) \in \Delta^{n-1}$ s.t. $y = \sum_{i=1}^n p_i z_i$. Note that $p$ is a probability distribution on $[n]$. Let $Z \sim P_Z$ be a random variable on $\set{z_1, \ldots, z_n}$ with $P_Z(z_i) := p_i$. Then $\E Z = y$. Now observe
    \begin{align*}
        \E D_{\psi}(\overline{Z}^{(k)}, w) - D_{\psi}(y, w) & = \E \left[ \psi(\overline{Z}^{(k)}) - \psi(w) - \langle \overline{Z}^{(k)} - w, \nabla \psi(w) \rangle \right] - \psi(y) + \psi(w) + \langle y - w, \nabla \psi(w) \rangle \\
                                                            & = \E \left[ \psi(\overline{Z}^{(k)}) \right] - \langle \E \overline{Z}^{(k)}, \nabla \psi(w) \rangle - \psi(y) + \langle y, \nabla \psi(w) \rangle                          \\
                                                            & = \E \left[ \psi(\overline{Z}^{(k)}) \right] - \psi(y)                                                                                                                      \\
                                                            & \leq \mathrm{Gap}(k; \psi).
    \end{align*}
    So there exist $i_1, \dots, i_k \in [n]$ and $x := \frac{1}{k} \sum_{j=1}^k z_{i_j} \in \co^k S$ s.t.
    $$ D_{\psi}(x, w) - D_{\psi}(y, w) \leq \E D_{\psi}(\overline{Z}^{(k)}, w) - D_{\psi}(y, w) \leq \mathrm{Gap}(k; \psi).$$
\end{proof}

From here, we now turn to bounding the Jensen approximation gap of the negative Shannon entropy function. While there is a extensive literature on bounding Jensen gaps \cite{Abramovich2016, gao2020boundsjensengapimplications, math9233132, Konenkov2024}, we find that a simple method inspired by \cite{elementary-jensen-gap} is sufficient to get our desired bound.

\begin{lemma} \label{thm:jensen-gap-entropy}
    Let $S \subset \btriangle^{c-1}$ be a set s.t. $a := (\sup_{p \in S} (1/p_i))_{i=1}^{c}$ is finite in all its coordinates (here $p_c := 1 - \sum_{i=1}^{c-1} p_i$). Let $\psi: \btriangle^{c-1} \to \eRp$ be the negative Shannon entropy function and let $\psi \big|_{\co S}$ be $\psi$ restricted to $\co S$. Then for any $k \in \sN$, we have
    $$\mathrm{Gap}(k; \psi \big|_{\co S}) \leq \frac{1}{8k} \sum_{i=1}^c a_i \left( 1 - \sum_{j=1}^{c} \frac{1}{a_i} \right)^2 \leq \frac{1}{8k} \sum_{i=1}^{c} a_i.$$
\end{lemma}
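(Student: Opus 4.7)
My plan is to exploit the fact that the negative Shannon entropy decomposes coordinatewise, reducing the multivariate Jensen gap to a sum of one-dimensional Jensen gaps. Writing $\psi(p) = \sum_{i=1}^c f(p_i)$ with $f(t) := t\log t$ (and the convention $p_c := 1 - \sum_{i<c} p_i$), linearity of expectation gives
$$\E\!\left[\psi(\overline{Z}^{(k)})\right] - \psi(\E Z) \;=\; \sum_{i=1}^c \Bigl(\E\!\left[f(\overline{Z}^{(k)}_i)\right] - f(\E Z_i)\Bigr),$$
so it suffices to bound each scalar term.

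Next I would pin down the per-coordinate ranges of $Z$. Since $a_j = \sup_{p\in S} 1/p_j$ is finite, every $p \in S$ satisfies $p_j \geq 1/a_j$, and this lower bound is preserved by convex combinations. Hence any $p \in \co S$ satisfies $p_j \geq 1/a_j$ for every $j$, which forces $p_i \leq 1 - \sum_{j\neq i} 1/a_j$. Writing $L_i := 1/a_i$ and $U_i := 1 - \sum_{j\neq i} 1/a_j$, we have $U_i - L_i = 1 - \sum_{j=1}^c 1/a_j$, and both $Z_i$ and its empirical mean $\overline{Z}^{(k)}_i$ take values in $[L_i, U_i]$.

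Now I apply the standard second-derivative bound on Jensen's gap: for a $C^2$ convex $f$ with $f''\leq M$ on an interval containing the support of $Y$, Taylor's theorem with remainder yields $\E[f(Y)] - f(\E Y) \leq \tfrac{M}{2}\Var(Y)$ (expand $f(Y) - f(\E Y) - f'(\E Y)(Y-\E Y)$ and take expectations). For $f(t) = t\log t$ we have $f''(t) = 1/t$, which on $[L_i,U_i]$ is at most $a_i$. Combined with Popoviciu's inequality $\Var(Z_i) \leq (U_i - L_i)^2/4$ and $\Var(\overline{Z}^{(k)}_i) = \Var(Z_i)/k$, the per-coordinate gap is bounded by
$$\frac{a_i}{2}\cdot\frac{1}{k}\cdot\frac{\bigl(1 - \sum_{j=1}^c 1/a_j\bigr)^2}{4} \;=\; \frac{a_i\bigl(1 - \sum_{j=1}^c 1/a_j\bigr)^2}{8k}.$$

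Summing over $i$ and taking the supremum over admissible $P_Z$ gives the first inequality; the second is immediate from $1 - \sum_j 1/a_j \in [0,1]$. The proof is largely routine; the only minor subtlety is checking that the interval $[L_i, U_i]$ is preserved under both convex combinations (so $Z$ on $\co S$ has the right support) and $k$-fold averaging, which is exactly what permits the uniform $f'' \leq a_i$ bound in the Taylor estimate.
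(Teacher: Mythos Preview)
Your proposal is correct and essentially the same as the paper's proof. Both arguments exploit the coordinatewise separability of negative entropy, bound each coordinate's second derivative $f''(t)=1/t$ by $a_i$ on the interval $[1/a_i,\,1-\sum_{j\neq i}1/a_j]$, and finish with Popoviciu's inequality together with $\Var(\overline{Z}^{(k)}_i)=\Var(Z_i)/k$; the only cosmetic difference is that the paper packages the Hessian bound as ``$\widetilde{\psi}(x)-\tfrac{1}{2}\sum_i a_i x_i^2$ is concave, now apply Jensen'' whereas you invoke the Taylor-remainder form of the Jensen gap directly.
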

\begin{proof}
    First note that it suffices to assume $S$ is convex. Indeed for any $q \in \co S$, note that $a_i \geq \frac{1}{q_i}$ for all $i \in [c]$, since $x \mapsto \frac{1}{x}$ is convex.

    So assume $S$ is convex. It is more convenient to consider the extension of $\psi$ to all $\sR^c_{> 0} := \set{x \in \sR^c : x_i > 0 \; \forall i \in [c]}$. We call this extension $\widetilde{\psi}: \sR^c_{> 0} \to \sR$ and define it as
    $$\widetilde{\psi}(x) := \sum_{i=1}^c x_i \log x_i.$$
    Let $\iota: \btriangle^{c-1} \to \Delta^{c-1}$ be the map defined
    $$\iota(p_1, \dots, p_{c-1}) := (p_1, \dots, p_{c-1}, 1 - \sum_{i=1}^{c-1} p_i).$$
    Then $\psi \big|_{S} = \widetilde{\psi} \circ \iota \big|_{S}$. So $\mathrm{Gap}(k; \psi \big|_{S}) = \mathrm{Gap}(k; \widetilde{\psi} \circ \iota \big|_{S})$, and it suffices to bound the Jensen approximation gap of $\widetilde{\psi}$ on $\iota(S)$.

    To that end, let $Z \sim P_Z$ be a random variable on $\iota(S)$. Since $\iota(S)$ is bounded and convex, $\E Z$ is well-defined and in $\iota(S)$. Let $\overline{Z}^{(k)}$ be the empirical mean of $k$ i.i.d. samples drawn from $P_Z$. Note that $(\nabla^2 \widetilde{\psi}(x))_{ij} = \delta_{ij} \frac{1}{x_i}$ for $i,j \in [c]$, where $\delta_{ij}$ is the Kroenecker delta. So $\widetilde{\psi}(x) - \frac{1}{2} \sum_{i=1}^{c} a_i x_i^2$ is concave as a function of $x \in \iota(S)$. By Jensen's inequality, we have
    \begin{align*}
                    & \E\left[ \widetilde{\psi}(\overline{Z}^{(k)}) - \frac{1}{2} \sum_{i=1}^{c} a_i (\overline{Z}^{(k)}_i)^2 \right] \leq \widetilde{\psi}(\E \overline{Z}^{(k)}) - \frac{1}{2} \sum_{i=1}^{c} a_i (\E \overline{Z}^{(k)}_i)^2 \\
        \Rightarrow & \E\left[ \widetilde{\psi}(\overline{Z}^{(k)}) \right] - \widetilde{\psi}(\E \overline{Z}^{(k)}) \leq \frac{1}{2} \sum_{i=1}^{c} a_i \left( \E\left[ (\overline{Z}^{(k)}_i)^2 \right] - (\E Z_i)^2 \right)                 \\
        \Rightarrow & \E\left[ \widetilde{\psi}(\overline{Z}^{(k)}) \right] - \widetilde{\psi}(\E \overline{Z}^{(k)}) \leq \frac{1}{2k} \sum_{i=1}^{c} a_i \Var\left[ Z_i \right].
    \end{align*}
    Since each $Z_i \in (1/a_i,1 - \sum_{j=1, j \neq i}^c 1/a_j)$, Popoviciu's inequality tells us
    $$\Var \left[ Z_i \right] \leq \frac{1}{4} \left(1 - \sum_{j=1, j \neq i}^c \frac{1}{a_j} - \frac{1}{a_i} \right)^2 = \frac{1}{4} \left(1 - \sum_{j=1}^{c} \frac{1}{a_j}\right)^2 \leq \frac{1}{4}.$$
    As $Z$ was arbitrary, we have that
    $$
        \mathrm{Gap}(k; \psi \big|_{S}) = \mathrm{Gap}(k; \widetilde{\psi} \circ \iota \big|_{S}) \leq \frac{1}{8k} \sum_{i=1}^c a_i \left( 1 - \sum_{j=1}^{c} \frac{1}{a_i} \right)^2 \leq \frac{1}{8k} \sum_{i=1}^{c} a_i.
    $$
\end{proof}

As a corollary, we bound the ``functional" Jensen approximation gap:
\begin{corollary} \label{thm:jensen-gap-entropy-func}
    Let $X \sim P_X$ be a random variable on finite set $\mathcal{X}$. Let $\mathcal{F} \subset (\btriangle^{c-1})^{\mathcal{X}}$ be a class of functions s.t. $\rho_i := \E_X \left[ \sup_{f \in \mathcal{F}}  (1/f_i(X)) \right] < \infty$ for all $i \in [c]$ (where $f_c := 1 - \sum_{i=1}^{c-1} f_i$). Let $\psi: \btriangle^{c-1} \to \eRp$ be the negative Shannon entropy function and let $\Psi: (\btriangle^{c-1})^{\mathcal{X}} \to \eRp$ be defined $\Psi[f] := \E_X \left[ \psi(f(X)) \right]$. Let $\Psi \big|_{\co \mathcal{F}}$ be $\Psi$ restricted to $\co \mathcal{F}$. Then for any $k \in \sN$, we have
    $$\mathrm{Gap}(k; \Psi \big|_{\co \mathcal{F}}) \leq \frac{1}{8k} \sum_{i=1}^{c} \rho_i.$$
\end{corollary}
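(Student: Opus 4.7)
My strategy is to reduce the functional inequality to the finite-dimensional \Cref{thm:jensen-gap-entropy} by using the finiteness of $\mathcal{X}$ to swap the expectation over $X$ with the i.i.d.\ sampling of functions, then apply the pointwise bound at each $x \in \mathcal{X}$ and integrate.

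First I would unpack the definition of the functional Jensen gap. Fix an arbitrary distribution $P_F$ on $\co \mathcal{F}$ with $\E F \in \co \mathcal{F}$, draw $F_1, \ldots, F_k \stackrel{\mathrm{iid}}{\sim} P_F$, and set $\overline{F}^{(k)} := \frac{1}{k}\sum_{j=1}^k F_j$. Using $\Psi[f] = \E_X[\psi(f(X))]$ together with Fubini (trivial since $\mathcal{X}$ is finite, so every expectation is a finite sum), I would rewrite
\begin{equation*}
    \E\!\left[\Psi(\overline{F}^{(k)})\right] - \Psi(\E F) = \E_X\!\left[\,\E[\psi(\overline{F}^{(k)}(X))] - \psi(\E[F(X)])\,\right].
\end{equation*}

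Next, for each fixed $x \in \mathcal{X}$, let $S_x := \{f(x) : f \in \mathcal{F}\} \subset \btriangle^{c-1}$. Because convex combinations of functions evaluate pointwise to convex combinations of their values, any $F \in \co \mathcal{F}$ satisfies $F(x) \in \co S_x$; consequently the samples $F_1(x), \ldots, F_k(x)$ are i.i.d.\ draws from a distribution on $\co S_x$, and $\overline{F}^{(k)}(x)$ is their empirical mean. The regularization hypothesis $\rho_i < \infty$ forces $a_i(x) := \sup_{p \in S_x} 1/p_i = \sup_{f \in \mathcal{F}} 1/f_i(x)$ to be finite for $P_X$-almost every $x$, so the hypotheses of \Cref{thm:jensen-gap-entropy} are met at $x$, giving
\begin{equation*}
    \E[\psi(\overline{F}^{(k)}(x))] - \psi(\E[F(x)]) \;\le\; \frac{1}{8k}\sum_{i=1}^{c} a_i(x).
\end{equation*}

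Finally I would take $\E_X$ of both sides, use the definition $\rho_i = \E_X[\sup_{f \in \mathcal{F}} 1/f_i(X)]$ to obtain
\begin{equation*}
    \E[\Psi(\overline{F}^{(k)})] - \Psi(\E F) \;\le\; \frac{1}{8k}\sum_{i=1}^{c} \rho_i,
\end{equation*}
and then take the supremum over admissible $P_F$ to conclude. I do not anticipate any real obstacle: the only minor subtleties are the pointwise inclusion $F(x) \in \co S_x$ (immediate) and the integrability of $a_i(x)$ (immediate from $\rho_i < \infty$). The whole argument is essentially a Fubini-plus-pointwise-lemma reduction, made painless by the finiteness of $\mathcal{X}$.
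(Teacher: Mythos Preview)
Your proposal is correct and is essentially the same argument as the paper's, just spelled out in more detail: the paper's one-line proof records the inequality $\mathrm{Gap}(k; \Psi|_{\co \mathcal{F}}) \le \E_X[\mathrm{Gap}(k; \psi|_{\co \mathcal{F}(X)})]$ and then invokes \Cref{thm:jensen-gap-entropy} pointwise, which is exactly your Fubini-plus-pointwise-lemma reduction.
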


\begin{proof}
    Note that $\E_X \left[\mathrm{Gap}(k; \psi \big|_{\mathcal{F}(X)}) \right] \geq \mathrm{Gap}(k; \Psi \big|_{\mathcal{F}})$. Thus,
    $$
        \mathrm{Gap}(k; \Psi \big|_{\mathcal{F}}) \leq \E_X \left[ \frac{1}{8k} \sum_{i=1}^{c} \sup_{f \in \mathcal{F}} 1 / f_i(X) \right] = \frac{1}{8k} \sum_{i=1}^{c} \rho_i.
    $$
\end{proof}

To complete the proof of \Cref{thm:multi-main-result}, we need one more bound:
\begin{lemma} \label{thm:kl-loginf-bd}
    Let $p,q \in \Delta^{c-1}$. Then
    $$\max\limits_{i \in [c]} | \log p_i - \log q_i | \leq \frac{\sqrt{2 \KL(p \| q)}}{\min_{i \in [c]} (p_i, q_i)}. $$
\end{lemma}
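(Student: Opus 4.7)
The plan is to combine an elementary mean-value-theorem estimate on $\log$ with Pinsker's inequality. First, for any fixed $i \in [c]$, apply the mean value theorem to $\log$ on the interval with endpoints $p_i$ and $q_i$: there exists $\xi_i$ between $p_i$ and $q_i$ such that
\[
    |\log p_i - \log q_i| = \frac{|p_i - q_i|}{\xi_i} \leq \frac{|p_i - q_i|}{\min(p_i, q_i)}.
\]
Since $\min(p_i,q_i) \geq \min_{j \in [c]}\min(p_j,q_j)$, this yields the uniform bound
\[
    \max_{i \in [c]} |\log p_i - \log q_i| \leq \frac{\max_{i \in [c]} |p_i - q_i|}{\min_{j \in [c]} \min(p_j, q_j)}.
\]

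Next, I would bound the numerator by $\KL(p\|q)$. Observe that $\max_{i \in [c]} |p_i - q_i| \leq \|p - q\|_1$, and then apply Pinsker's inequality in the form $\|p - q\|_1 \leq \sqrt{2\,\KL(p\|q)}$ (this is the standard statement once one recalls $\|p-q\|_1 = 2\|p-q\|_{TV}$). Chaining these two bounds gives exactly
\[
    \max_{i \in [c]} |\log p_i - \log q_i| \leq \frac{\sqrt{2\,\KL(p \| q)}}{\min_{i \in [c]} (p_i, q_i)},
\]
as required. There is no real obstacle here: the only subtlety is being careful about the case when $\KL(p\|q) = \infty$ (the bound is vacuous) or when some $p_i$ or $q_i$ equals $0$ (in which case $\min_{i \in [c]}(p_i,q_i) = 0$ and the RHS is $+\infty$, making the inequality trivial). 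Both edge cases are handled by the convention $0 \cdot \infty = 0$ implicit in the statement.
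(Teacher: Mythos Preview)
Your proof is correct and follows essentially the same approach as the paper: apply the mean value theorem to bound $|\log p_i - \log q_i|$ by $|p_i - q_i|/\min(p_i,q_i)$, then bound $\max_i |p_i - q_i| \le \|p-q\|_1$ and invoke Pinsker's inequality. The only difference is cosmetic (the paper writes $\max(1/p_i,1/q_i)$ where you write $1/\min(p_i,q_i)$), and your edge-case remarks are unnecessary since the paper's $\Delta^{c-1}$ is the \emph{open} simplex.
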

\begin{proof}
    Observe that
    \begin{align*}
        \max\limits_{i \in [c]} | \log p_i - \log q_i | & \leq \max\limits_{i \in [c]} \left( \max(1/p_i, 1/q_i) |p_i  - q_i | \right) \tag{Mean-Value Theorem} \\
                                                        & \leq \max\limits_{i \in [c]} (1/p_i, 1/q_i) \sum_{i=1}^{c} |p_i - q_i|                                \\
                                                        & \leq \max\limits_{i \in [c]} (1/p_i, 1/q_i) \sqrt{2 \KL(p \| q)}  \tag{Pinsker's Inequality}          \\
                                                        & = \frac{\sqrt{2 \KL(p \| q)}}{\min_{i \in [c]} (p_i, q_i)}.
    \end{align*}
\end{proof}

Numerical calculations we did when $c=2$ suggest the bound in \Cref{thm:kl-loginf-bd} can be improved to
$$\max\limits_{i \in [c]} | \log p_i - \log q_i | \leq \sqrt{\frac{2 \KL(p \| q)}{\min_{i \in [c]} (p_i, q_i)}}.$$
However, the dependence on $\min_{i \in [c]} (p_i, q_i)$ does appear to be necessary. We leave it for future work to tighten this bound.

With these lemmas, we can now prove the main result for the multi-class setting when our strong class is not convex.
\begin{proof}[Proof of \Cref{thm:multi-main-result} and \ref{thm:main-result}]
    Similarly to the proof of \Cref{thm:multi-main-result-conv}, it suffices to prove the inequality
    $$\E_X \left[ \KL(g(X) \| f_s(h_s(X))) \right] \leq \E_X \left[ \KL(g(X) \| f_w(h_w(X))) \right] - \E_X \left[ \KL(f_s(h_s(X)) \| f_w(h_w(X))) \right] + O(\sqrt{\frac{c}{k}})$$
    instead.

    Let $\rho_i := \E_X \left[ \sup_{f \in \mathcal{F}} 1 / f_i(X) \right]$ for $i \in [c]$. By \Cref{thm:jensen-gap-entropy-func} and \Cref{thm:bregman-approximation}, we have that for all $f^{(\text{conv})} \in \co \mathcal{F}$,
    $$
        \inf\limits_{f \in \co^k \mathcal{F}} \E_X \left[ \KL(f(h_s(X)) \| f_w(h_w(X))) \right]\leq \E_X \left[\KL(f^{(\text{conv})}(h_s(X)) \| f_w(h_w(X)))\right] + \frac{1}{8k} \sum_{i=1}^{c} \rho_i.
    $$
    Thus,
    $$
        \inf\limits_{f \in \co^k \mathcal{F}} \E_X \left[\KL(f(h_s(X)) \| f_w(h_w(X)))\right] \leq \inf\limits_{f \in \co \mathcal{F}} \E_X \left[ \KL(f(h_s(X)) \| f_w(h_w(X))) \right] + \frac{1}{8k} \sum_{i=1}^{c} \rho_i.
    $$
    Let $g^{(\text{proj})} := P_{\cco (\mathcal{F} \circ h_s)}(f_w \circ h_w)$. Since
    $$
        \E_X \left[ \KL(g^{(\text{proj})}(X) \| f_w(h_w(X))) \right] =  \inf\limits_{f \in \co \mathcal{F}} \E_X \left[ \KL(f(h_s(X)) \| f_w(h_w(X))) \right],
    $$
    we have that
    $$
        \inf\limits_{f \in \co^k \mathcal{F}} \left[ \E_X \KL(f(h_s(X)) \| f_w(h_w(X))) \right] \leq \E_X \left[ \KL(g^{(\text{proj})}(X) \| f_w(h_w(X))) \right] + \frac{1}{8k} \sum_{i=1}^{c} \rho_i.
    $$
    Applying \Cref{thm:multi-main-result-conv}, for any $\epsilon > 0$, if $f_s \in \co^k \mathcal{F}$ is s.t.
    $$\E_X \left[ \KL(f_s(h_s(X)) \| f_w(h_w(X))) \right] \leq \inf\limits_{f \in \co^k \mathcal{F}} \E_X \left[ \KL(f(h_s(X)) \| f_w(h_w(X))) \right] + \epsilon,$$
    then
    \begin{align*}
        \E_X \left[ \KL(g(X) \| g^{(\text{proj})}(X)) \right] & \leq \E_X \left[ \KL(g(X) \| f_w(h_w(X))) \right] - \E_X  \left[ \KL(g^{(\text{proj})}(X)\| f_w(h_w(X))) \right]                                                              \\
                                                              & \leq \E_X \left[\KL(g(X) \| f_w(h_w(X))) \right] - \inf\limits_{f \in \co^k \mathcal{F}} \E_X \left[\KL(f(h_s(X)) \| f_w(h_w(X))) \right]+ \frac{1}{8k} \sum_{i=1}^{c} \rho_i \\
                                                              & \leq \E_X \left[\KL(g(X) \| f_w(h_w(X))) \right] - \E_X \left[\KL(f_s(h_s(X)) \| f_w(h_w(X))) \right]+ \epsilon + \frac{1}{8k} \sum_{i=1}^{c} \rho_i.
    \end{align*}
    Now it remains to bound the left-hand side. First observe that, by applying to \Cref{thm:multi-main-result-conv} to $f_s \circ h_s$ instead of $g$, we have
    \begin{align*}
        \E_X \KL(f_s(h_s(X)) \| g^{(\text{proj})}(X)) & \leq \E_X \KL(f_s(h_s(X)) \| f_w(h_w(X))) - \E_X \KL(g^{(\text{proj})}(X)\| f_w(h_w(X))) \\
                                                      & \leq \frac{1}{8k} \sum_{i=1}^{c} \rho_i + \epsilon.
    \end{align*}
    Now we apply \Cref{thm:kl-loginf-bd}. First note that
    \begin{align*}
        \left|\E_X \left[ \KL(g(X) \| g^{(\text{proj})}(X)) \right] - \E_X \left[ \KL(g(X) \| f_s(h_s(X)))\right] \right| & \leq \E_X \left[ |\KL(g(X) \| g^{(\text{proj})}(X)) - \KL(g(X) \| f_s(h_s(X)))|  \right]              \\
                                                                                                                          & \leq \E_X \left[ \sum_{i=1}^{c} g_i(X) |\log g^{(\text{proj})}_i(X) - \log f_{s,i}(h_s(X))| \right]   \\
                                                                                                                          & \leq \E_X \left[ \max\limits_{i \in [c]} |\log g^{(\text{proj})}_i(X) - \log f_{s,i}(h_s(X))| \right]
    \end{align*}
    Using \Cref{thm:kl-loginf-bd}, we then have
    \begin{align*}
        \left|\E_X \left[ \KL(g(X) \| g^{(\text{proj})}(X)) \right]- \E_X \left[ \KL(g(X) \| f_s(h_s(X)))\right] \right| & \leq \E_X \left[ \frac{\sqrt{2 \KL(f_s(h_s(X)) \| g^{(\text{proj})}(X))}}{\min\limits_{i \in [c]} (f_{s,i}(X), g^{(\text{proj})}_i(X))} \right] \\
                                                                                                                         & \leq \E_X \left[ \max\limits_{i \in [c]} \rho_i \sqrt{\frac{c}{4k} \max\limits_{i \in [c]} \rho_i + \epsilon}  \right]                          \\
                                                                                                                         & \leq \left( \max\limits_{i \in [c]} \rho_i \right)^{3/2} \sqrt{\frac{c}{4k}} + \epsilon_2,
    \end{align*}
    where $\epsilon_2 \to 0$ as $\epsilon \to 0$. Incorporating this estimation into
    $$ \E_X \left[\KL(g(X) \| g^{(\text{proj})}(X))\right] \leq \left[\E_X \KL(g(X) \| f_w(h_w(X))) \right]- \E_X \left[\KL(f_s(h_s(X)) \| f_w(h_w(X))) \right]+ \epsilon + \frac{1}{8k} \sum_{i=1}^{c} \rho_i,$$
    we have
    \begin{align*}
        \E_X \left[\KL(g(X) \| f_s(h_s(X)))\right] \leq \E_X & \left[\KL(g(X) \| f_w(h_w(X))) \right]- \E_X \left[\KL(f_s(h_s(X)) \| f_w(h_w(X)))\right]                                               \\
                                                             & + \epsilon + \frac{1}{8k} \sum_{i=1}^{c} \rho_i + \left( \max\limits_{i \in [c]} \rho_i \right)^{3/2} \sqrt{\frac{c}{4k}} + \epsilon_2.
    \end{align*}
    Since we are free to choose $\epsilon$ as small as we want, we can make $\epsilon = o(1/\sqrt{k})$, ensuring that the error term is $O(\sqrt{c / k})$.

    Therefore, for every $k \in \sN$, there exists $\delta > 0$ s.t. for all $f_s \in \co^k \mathcal{F}$ for which
    $$
        \E_X \left[ \KL(f_s(h_s(X)) \| f_w(h_w(X))) \right] \leq \inf\limits_{f \in \co^k \mathcal{F}} \E_X \left[ \KL(f(h_s(X)) \| f_w(h_w(X))) \right] + \delta,
    $$
    we have
    $$
        \E_X \left[ \KL(g(X) \| f_s(h_s(X))) \right] \leq \E_X \left[ \KL(g(X) \| f_w(h_w(X))) \right]- \E_X \left[ \KL(f_s(h_s(X)) \| f_w(h_w(X))) \right] + O\left( \sqrt{c / k} \right).
    $$
\end{proof}

As noted below \Cref{thm:kl-loginf-bd}, numerical calculations of the optmial bound for \Cref{thm:kl-loginf-bd} for $c=2$ suggests that our constant term in the $O\left( \sqrt{c / k} \right)$ error term can be improved from $\frac{\left( \max\limits_{i \in [c]} \rho_i \right)^{3/2}}{2}$ to $\frac{\max\limits_{i \in [c]} \rho_i}{2}$. We leave it for future work to derive this bound rigorously, and determine if the decay rate of the error term in \Cref{thm:multi-main-result-conv} is asymptotically tight.

\subsubsection{A Brief note on the Coordinate-Independent Approach}
\label{sec:coordinate-independent}
As noted in \Cref{sec:coordinate-dependent}, the Bregman divergence $\KL$ for multiple classes is derived by arbitrarily choosing to drop the redundant $c$-th class. What if we had chosen to drop the $i$-th class instead, for $i \in [c]$? We can capture the fact that the underlying theory is invariant under such a choice by the use of smooth manifolds, specifically affine manifolds. A manifold $M$ is a topological space (satisfying some additional technical properties) equipped with a collection of coordinate charts $\set{(U_i, \varphi_i)}_{i \in I}$, where $U_i$ is an open subset of $M$ and $\varphi_i: U_i \to \R^n$ is a topological embedding, s.t. $\set{U_i}_{i \in I}$ is an open cover of $M$. For a manifold $M$ and a pair of charts $(U_1, \varphi_1)$ and $(U_2, \varphi_2)$, for open $U_1 \subset U_2$, we call the map $\varphi_2 \circ \varphi_1^{-1} : \varphi_1(U_1 \cap U_2) \to \varphi_2(U_2)$ a transition function. A manifold is smooth if all its transition functions are smooth as functions in the sense of usual multivariate calculus. A manifold is \textit{affine} if all its transition functions are affine maps.

We can extend the notion of convexity to an affine manifold $M$ by declaring a function $\psi: M \to \eRp$ to be convex at $p \in M$ if there exists a chart $(U, \varphi)$ containing $p$ such that $\psi \circ \varphi^{-1}$ is convex as a function from $\varphi(U) \subset \R^n$ to $\eRp$. Because all transition functions are affine, if $\psi$ is convex at $p \in M$ with respect to one chart, it is convex with respect to all charts containing $p$. Thus, our definition of convexity does not rely on a specific choice of coordinates.

One can then check that Bregman divergences can be defined in a similar way that is independent of the coordinate charts. $\KL$ will arise when use the affine charts $\varphi_i: \Delta^{c-1} \to \btriangle^{c-1}$ defined by $\varphi_i(p) := (p_1, \dots, p_{i-1}, p_{i+1}, \dots, p_c)$ for each $i \in [c]$. The calculations we did in the previous section were manipulating $\KL$ with the specific chart $\varphi_c$.

This coordinate-independent approach is employed in Information Geometry. We refer the interested reader to \cite{amari2016information, Ay2017} for a more in-depth treatment of this approach.

\subsection{Generalizing beyond Input Data Distributions with Finite Support}
\label{sec:beyond-finite}
A major simplifying assumption in this work is that the input data distribution $P_X$ has finite support. When $P_X$ has infinite support or is continuous, function spaces of models become subsets of infinite-dimensional Banach spaces, and more care needs to be taken to justify the calculations we did in the previous sections.

The fundamentals of convex analysis in infinite-dimensional vectors spaces can be found in \cite{ekeland1999convex}. In addition many works address the topic of Bregman divergences in infinite-dimensional spaces, with \cite{bauschkeBregmanMonotoneOptimizationAlgorithms, Reem_2018} being good comprehensive overviews on the topic. Below we briefly summarize the steps needed to complete the generalization.

Given an arbitrary probability measure $P_X$ on $\mathcal{X}$ let $X \sim P_X$. Suppose we have strictly convex $\psi: \sR^d \to \eRp$ that is $C^1(U_\psi)$. As described in \Cref{sec:preliminaries}, $\psi$ generates a Bregman divergence $D_{\psi}$. In \Cref{sec:exp_bregman}, we showed that the expectation of a Bregman divergence is itself a Bregman divergence of the convex functional $\Psi[f] := \E_X \left[ \psi(f(X)) \right]$. When our function space is infinite-dimensional, we need a weaker notion of a $C^1$ function. The appropriate notion is that of a \textit{Legendre function} defined in \cite{bauschkeBregmanMonotoneOptimizationAlgorithms}, extending the finite-dimensional notion from \cite{rockafellar1997convex}. Existence and uniqueness of Bregman projections then follows from weak lower-semicontinuity of $D_{\Psi}$ and boundedness of the sublevel sets of $D_{\Psi}$. \Cref{thm:main-result-conv-bg} follows in much the same way, but we need sequential consistency of $D_{\Psi}$ instead of just $D_\psi$. Fortunately, Pinsker's inequality still establishes that $\E_X \KL$ is sequentially consistent with respect to $L^p$ topologies on $\mathcal{X} \to \btriangle^{c-1}$ for $p < \infty$. This then completes the generalization \Cref{thm:multi-main-result-conv}. Many of the details in our proof of \Cref{thm:main-result} in \Cref{sec:multi-class-proof} generalize directly due to convenient properties of convex functions. For example, we can swap expectations when passing the Jensen gap of $\psi$ to a bound on the gap of $\Psi$ because the gap is always nonnegative by Jensen's theorem, and Tonelli's theorem allows us to swap the order of the expectations.

\subsection{Plots for Varying $k$}
\label{sec:plots-varying-k}

See \Cref{fig:varying-k-all-plots}.

\begin{figure}[t]
    \begin{subfigure}{.33\textwidth}
        \centering
        \includegraphics[width=1\linewidth]{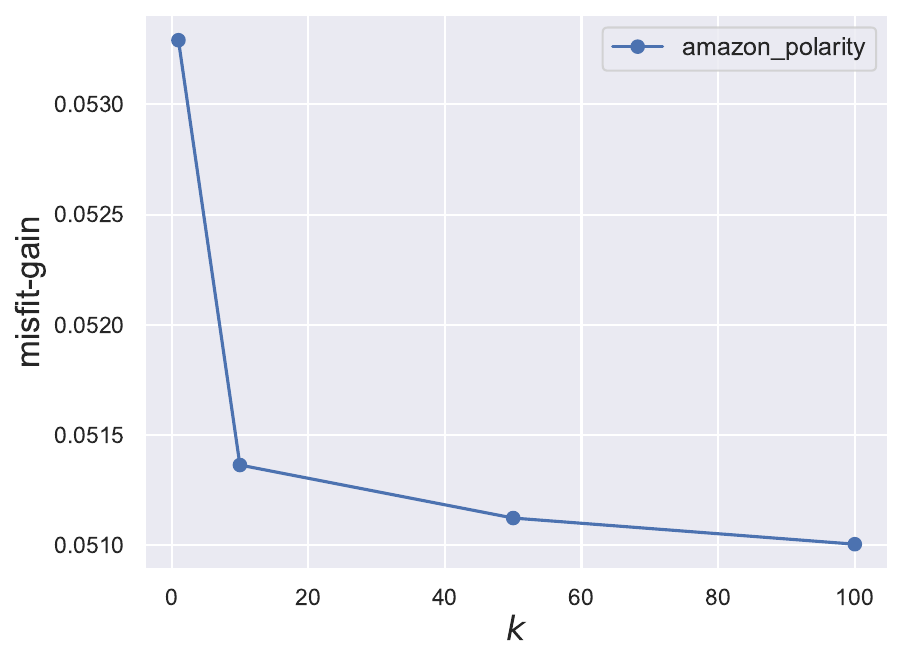}
        \caption{Amazon Polarity}
        \label{fig:amazon_polarity-k}
    \end{subfigure}
    \begin{subfigure}{.33\textwidth}
        \centering
        \includegraphics[width=1\linewidth]{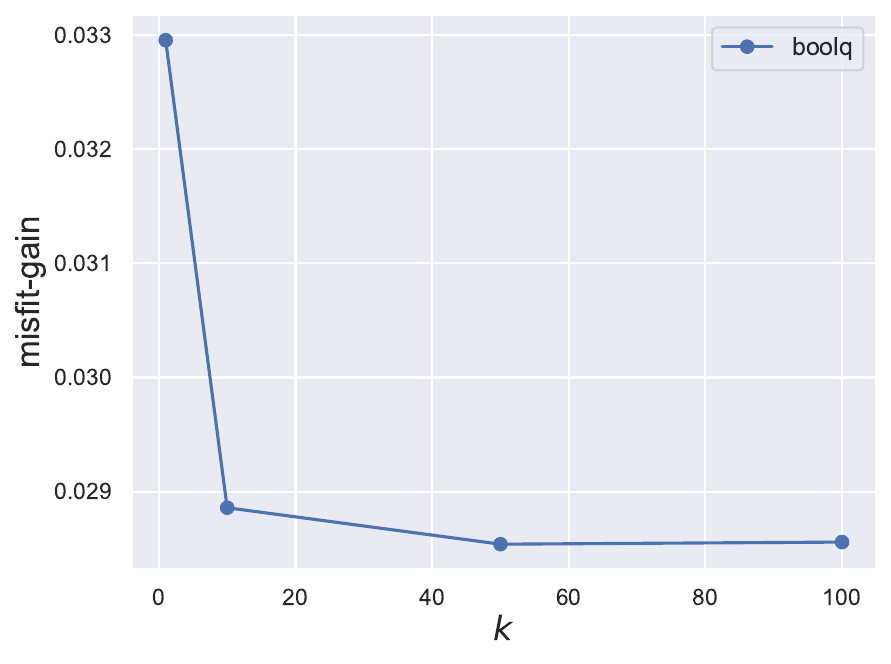}
        \caption{BoolQ}
        \label{fig:boolq-k}
    \end{subfigure}
    \begin{subfigure}{.33\textwidth}
        \centering
        \includegraphics[width=1\linewidth]{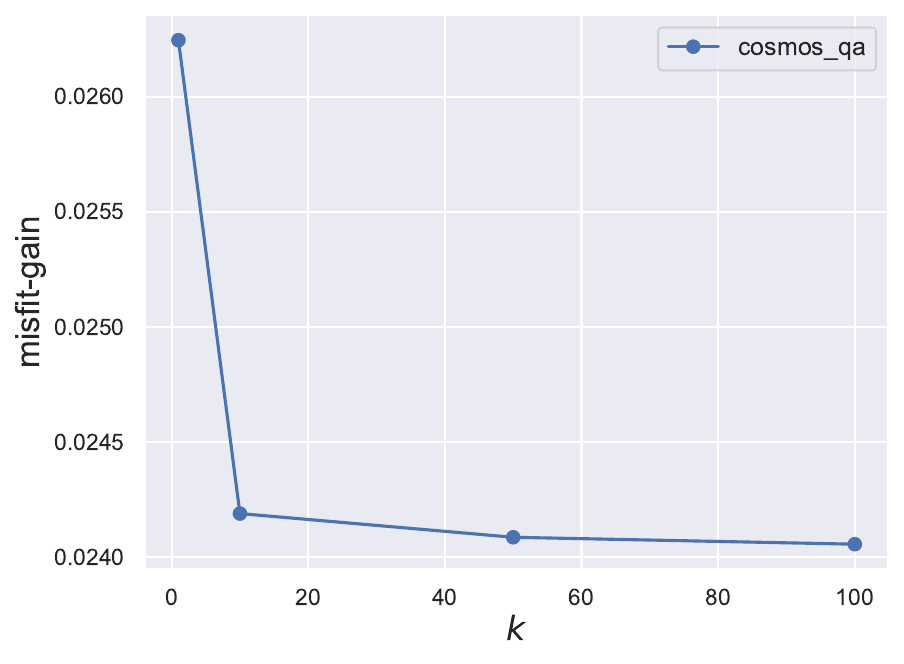}
        \caption{CosmosQA}
        \label{fig:cosmos_qa-k}
    \end{subfigure}
    \newline
    \begin{subfigure}{.33\textwidth}
        \centering
        \includegraphics[width=1\linewidth]{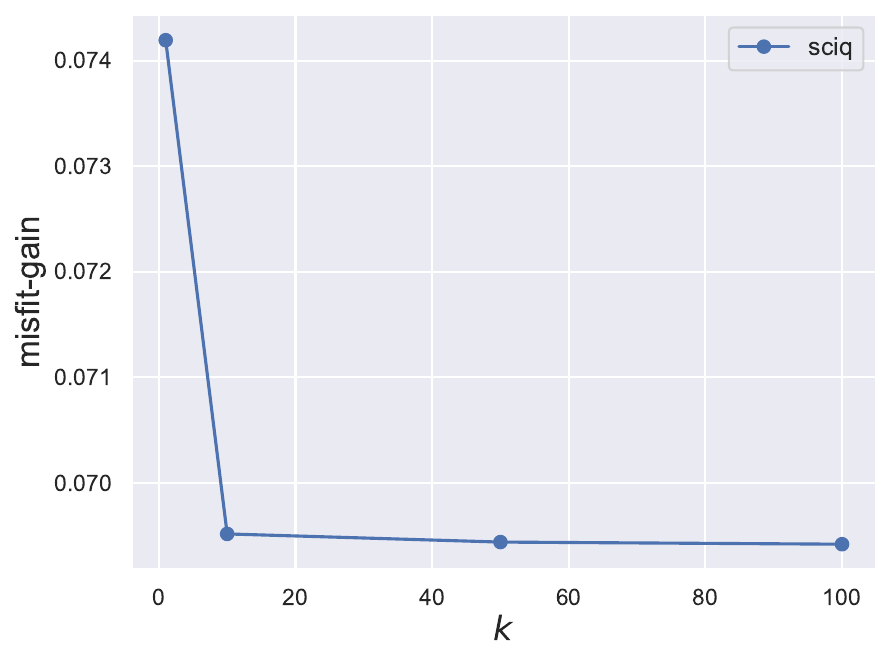}
        \caption{SciQ}
        \label{fig:sciq-k}
    \end{subfigure}
    \begin{subfigure}{.33\textwidth}
        \centering
        \includegraphics[width=1\linewidth]{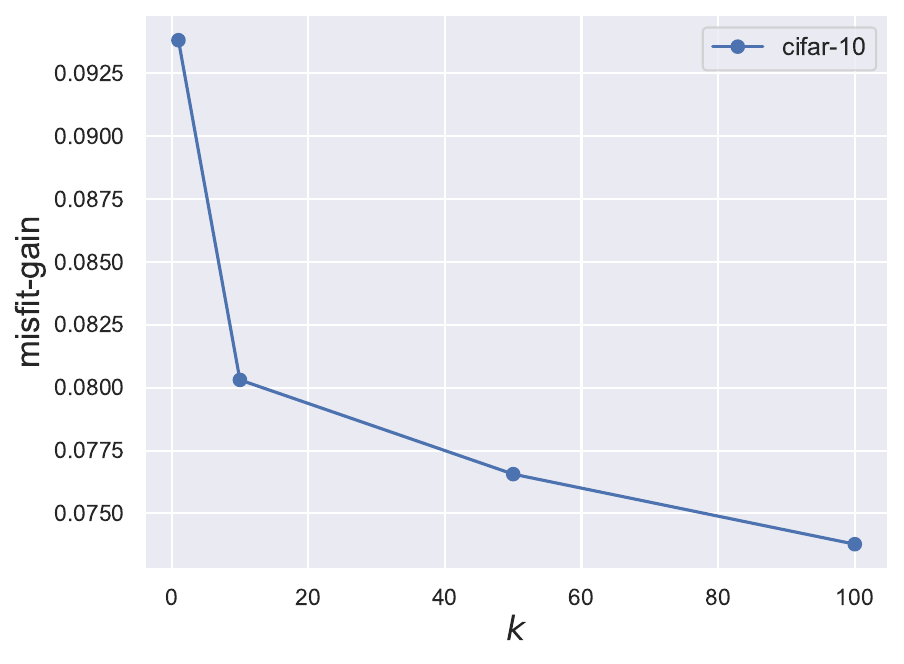}
        \caption{CIFAR-10}
        \label{fig:cifar-10-k}
    \end{subfigure}
    \begin{subfigure}{.33\textwidth}
        \centering
        \includegraphics[width=1\linewidth]{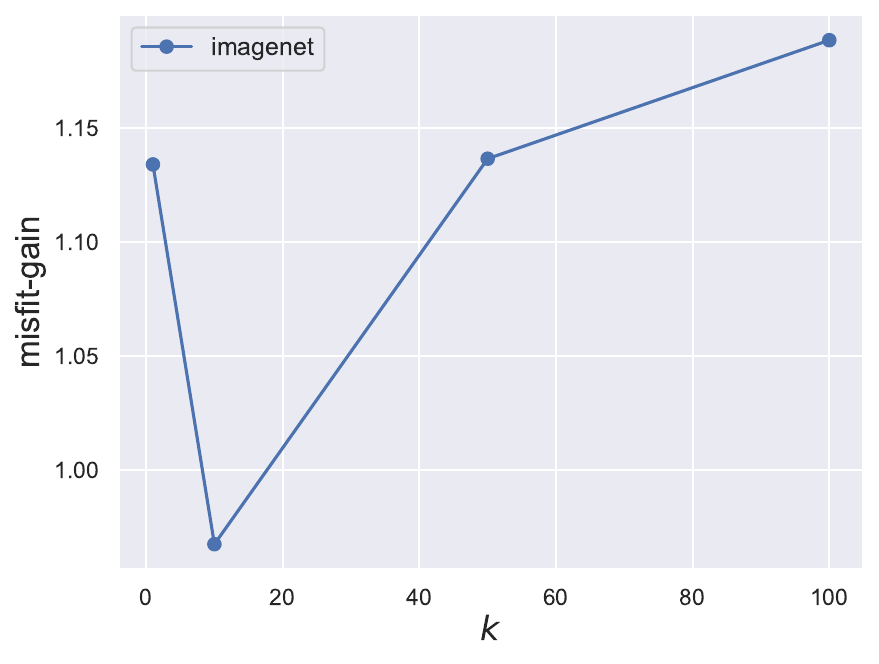}
        \caption{ImageNet}
        \label{fig:imagenet-k}
    \end{subfigure}
    \caption{For all datasets except ImageNet, we observe that the difference between misfit and gain decreases as $k$ increases, and also that the decrease slows down with increasing $k$. Since ImageNet has $c=1000$ classes, and we only consider values of $k$ till $100$, we suspect that the $c$ term dominates in the $O(\sqrt{c/k})$ error.}
    \label{fig:varying-k-all-plots}
\end{figure}

\end{document}